\newtheorem{assumption}{Assumption}
\newtheorem{theorem}{Theorem}
\newtheorem{lemma}{Lemma}
\newtheorem{corollary}{Corollary}
\newtheorem{example}{Example}
\newtheorem{remark}{Remark}
\title{Can Mean Field Control (MFC) Approximate Cooperative Multi Agent Reinforcement Learning (MARL) with Non-Uniform Interaction?}
\author[1, 2]{\href{mailto:<wmondal@purdue.edu>?Subject=Your UAI 2022 paper}{Washim~Uddin~Mondal}{}}
\author[1]{Vaneet Aggarwal}
\author[2]{Satish V. Ukkusuri}
\affil[1]{%
	School of Industrial Engineering\\
	Purdue University\\
	West Lafayette, Indiana, USA 47907
}
\affil[2]{%
	Lyles School of Civil Engineering\\
	Purdue University\\
	West Lafayette, Indiana, USA 47907
}
\begin{document}
	\maketitle
	
\begin{abstract}
	Mean-Field Control (MFC) is a powerful tool to solve Multi-Agent Reinforcement Learning (MARL) problems. Recent studies have shown that MFC can well-approximate MARL when the population size is large and the agents are exchangeable. Unfortunately, the presumption of exchangeability implies that all agents uniformly interact with one another which is not true in many practical scenarios. In this article, we relax the assumption of exchangeability and model the interaction between agents via an arbitrary doubly stochastic matrix. As a result, in our framework, the mean-field `seen' by different agents are different. We prove that, if the reward of each agent is an affine function of the mean-field seen by that agent, then one can approximate such a non-uniform MARL problem via its associated MFC problem within an error of $e=\mathcal{O}(\frac{1}{\sqrt{N}}[\sqrt{|\mathcal{X}|} + \sqrt{|\mathcal{U}|}])$ where $N$ is the population size and $|\mathcal{X}|$, $|\mathcal{U}|$ are the sizes of state and action spaces respectively. Finally, we develop a Natural Policy Gradient (NPG) algorithm that can provide a solution to the non-uniform MARL with an error $\mathcal{O}(\max\{e,\epsilon\})$ and a sample complexity of $\mathcal{O}(\epsilon^{-3})$ for any $\epsilon >0$.
\end{abstract}
	
	\section{Introduction}\label{sec:intro}
	
	Multi-Agent Systems (MAS) are ubiquitous in the modern world. Many engineered systems such as transportation networks, power distribution and wireless communication systems can be modeled as MAS. Modeling, analysis and control of such systems to improve the overall performance is a central goal of research across multiple disciplines. Multi-Agent Reinforcement Learning (MARL) is a popular approach to achieve that target. In this article, we primarily focus on \textit{cooperative} MARL where the goal is to determine \textit{policies} for each individual agent such that the aggregate cumulative \textit{reward} of the entire population is maximized. However, the sizes of joint state, and action spaces of the population grows exponentially with the number of agents. This makes the computation of the solution prohibitively hard for large MAS. 
	
	Two major computationally efficient approaches have been developed to tackle this problem. The first approach restricts its attention to local policies. In other words, it is assumed that each individual agent makes its decision solely based on its local state/observation. Algorithms that fall into this category are independent Q-learning (IQL) \citep{tan1993multi}, centralised training and decentralised execution (CTDE) based algorithms such as VDN \citep{sunehag2017value}, QMIX \citep{rashid2018qmix}, WQMIX \citep{rashid2020weighted}, etc. Unfortunately, none of these algorithms can provide theoretical convergence guarantees. The other approach is called mean-field control (MFC) \citep{angiuli2022unified}. It is grounded on the idea that in an infinite population of homogeneous agents, it is sufficient to study the behaviour of only one representative agent in order to draw accurate conclusions about the whole population. Recent studies have shown that, if the agents are exchangeable, then MFC can be proven to be a good approximation of MARL \citep{gu2021mean}.
	
	Unfortunately, the idea of exchangeability essentially states that all agents in a population uniformly interact with each other (uniform means that all pairwise interactions are the same). This is not true in many practical scenarios. For example, in a traffic control network, the congestion at an intersection is highly influenced by the control policies adopted at its immediate neighbouring intersections. Moreover, the influence of an intersection on another intersection rapidly diminishes with increase of their separation distance. Non-uniform interaction is a hallmark characteristic of many other MASs such as social networks, wireless networks etc. In the absence of uniformity of the interaction between the agents, the framework of MFC no longer applies, and the problem becomes challenging. In this paper, we come up a new result which assures that even with non-uniform interactions, MFC is a good choice for approximating MARL  if the reward of each agent is an affine function of the mean-field distributions `seen' by that agent. We note that the behaviour of agents in multitude of social and economic networks can be modeled via affine rewards (refer the examples given in \citep{chen2021agent}), and thus for many cases of practical interest, MFC can approximate MARL with non-uniform interactions. 

	\subsection{Contributions}
	\label{sec:contributions}
	
	We  consider a non-uniform MARL setup where the pairwise interaction between the agents is described by an arbitrary doubly stochastic matrix (DSM). As a result of non-uniform interaction, the so-called mean-field effect of the population on an agent is determined by the identity of the agent. This is in stark contrast with other existing works \citep{gu2021mean, mondal2021approximation} where the presumption of exchangeability washes away the dependence on identity.	
	 We demonstrate that, if the reward of each agent is an affine function of the mean-field distribution `seen' by that agent, then the standard MFC approach can approximate the non-uniform MARL with an error bound of $e\triangleq\mathcal{O}(\frac{1}{\sqrt{N}}[\sqrt{|\mathcal{X}|}+\sqrt{|\mathcal{U}|}])$, where $N$ is the number of agents and $|\mathcal{X}|, |\mathcal{U}|$ indicate the sizes of state and action spaces of individual agent.
	
	We would like to emphasize the importance of this result. MFC is traditionally seen as an approximation method of MARL when the agents are exchangeable and hence their interactions are uniform. Uniformity allows us to solve MFC problems by tracking only one representative agent. In this paper, we show that, under certain conditions, a non-uniform MARL can also be approximated by the MFC approach. Thus, although the non-uniform interaction is a major part of the original MARL problem, the assumed affine structure of the reward function allows us to evade non-uniformity while obtaining an approximate solution. The key result is established in Lemma \ref{lemma_7} (Appendix \ref{appndx:approx}) where, using the affine structure of the reward function, we show that the instantaneous reward generated from non-uniform MARL can be closely approximated by MFC-generated instantaneous reward.
	
	Finally, using the results of \citep{liu2020improved}, in section \ref{sec:npg}, we design a natural policy-gradient based algorithm that can solve MFC within an error of $\mathcal{O}(\epsilon)$ for any $\epsilon >0$, with a sample complexity of $\mathcal{O}(\epsilon^{-3})$. Invoking our approximation result, we prove that the devised algorithm can yield a solution that is $\mathcal{O}(\max\{e, \epsilon\})$ error away from the optimal MARL solution, with a sample complexity of $\mathcal{O}(\epsilon^{-3})$ for any $\epsilon>0$.
	
	\subsection{Related Works}
	\label{sec:related_works}
	{\bf Single Agent RL:} The classical algorithms in single agent learning include tabular Q-learning \cite{watkins1992q}, SARSA \cite{rummery1994line}, etc. Although they provide theoretical guarantees, these algorithms can only be applied to small state-action space based systems due to their large memory requirements. Recently Neural Network (NN) based  $Q$-iteration \cite{mnih2015human}, and policy gradient \cite{mnih2016asynchronous} algorithms have becomes popular due to the large expressive power of NN. However, they cannot be applied to large MAS due to the exponential blow-up of joint state-space. 
	
	{\bf MFC as an Approximation to Uniform MARL:} Recently, MFC is gaining traction as a scalable approximate solution to uniform MARL. On the theory side, recently it has been proven that MFC can approximate uniform MARL within an error of $\mathcal{O}(1/\sqrt{N})$ \citep{gu2021mean}. However, the result relies on the assumption that all agents are homogeneous. Later, this approximation result was extended to heterogeneous agents \citep{mondal2021approximation}. We would like to clarify that the idea of heterogeneity is different from the idea of non-uniformity. In the first case, the agents are divided into multiple classes. However, the identities of different agents within a given class are irrelevant. In contrast, non-uniform interaction takes the identity of each agent into account.
	
	{\bf Graphon Approximation:} One possible approach to consider non-uniform agent interaction is the notion of Graphon mean-field, which is recently gaining popularity in the \textit{non-cooperative} MARL setup \citep{caines2019graphon,cui2021learning}. The main idea is to approximate the finite indices of the agents as a continuum of real numbers and the discrete interaction graph between agents as a continuous, symmetric, measurable function, called graphon, in the asymptotic limit of infinite population. The unfortunate consequence of this approximation is that one is left to deal with an infinite dimensional mean-field distribution. In order to obtain practical solution from graphon-approximation, one must therefore discretise the continuum of agent indices \citep{cui2021learning}, which limits the use of this approximation. Our paper establishes that for affine reward functions, we do not need to go to the complexity of Graphon approximation.  
	
	{\bf Applications of MFC:} Alongside the theory, MFC has also become popular as an application tool. It has been used in ride-sharing \citep{al2019deeppool}, epidemic management \citep{watkins2016optimal}, congestion control in road network \citep{wang2020large} etc.
	
	{\bf Learning Algorithms for MFC:} Both model-free \citep{angiuli2022unified, gu2021mean} and model-based \citep{pasztor2021efficient} Q-learning algorithms have been proposed in the literature to solve uniform MARL via MFC with homogeneous agents. Recently, \citep{mondal2021approximation} proposed a policy-gradient algorithm for heterogeneous-MFC.

	\section{Cooperative MARL with Non-Uniform Interaction}\label{sec:cooperativeMARL}
	We consider a system comprising of $N$ interacting agents. The (finite) state and action spaces of each agent are denoted as $\mathcal{X}$, and $\mathcal{U}$ respectively. Time is assumed to belong to the discrete set, $\mathbb{T}\triangleq\{0,1,2,\cdots\}$. The state and action of $i$-th agent at time $t$ are symbolized as $x_t^i$ and $u_t^i$. The empirical state and action distributions of the population of agents at time $t$ are denoted by $\boldsymbol{\mu}_t^N$, and $\boldsymbol{\nu}_t^N$ respectively, and defined as follows.
	\begin{align}
		\boldsymbol{\mu}_t^N(x) \triangleq \dfrac{1}{N}\sum_{i=1}^N \delta(x_t^i=x), ~\forall x\in \mathcal{X},\forall t\in\mathbb{T}
		\label{mu}
		\\
		\boldsymbol{\nu}_t^N(u) \triangleq \dfrac{1}{N} \sum_{i=1}^N \delta(u_t^i=u), ~\forall u\in \mathcal{U}, \forall t\in\mathbb{T}
		\label{nu}
	\end{align}
	where $\delta(\cdot)$ is the indicator function.
	
	Each agent, $i\in [N] \triangleq\{1,\cdots, N\}$ is endowed with a reward function $r$ and a state transition function $P$ that are of the following forms: $r:\mathcal{X}\times \mathcal{U} \times \mathcal{P}(\mathcal{X})\times \mathcal{P}(\mathcal{U})\rightarrow \mathbb{R}$ and $P:\mathcal{X}\times \mathcal{U} \times \mathcal{P}(\mathcal{X})\times \mathcal{P}(\mathcal{U})\rightarrow \mathcal{P}(\mathcal{X})$ where $\mathcal{P}(\cdot)$ is the set of all Borel probability measures over its argument. In particular, $r, P$ take the followings as arguments: (a) the state, $x_t^i$ and action, $a_t^i$ of the corresponding agent and (b) the weighted state distribution $\boldsymbol{\mu}_t^{i,N}$ and the weighted action distribution $\boldsymbol{\nu}_t^{i,N}$ of the population as seen from the perspective of the agent. The terms $\boldsymbol{\mu}_t^{i,N}$ and $\boldsymbol{\nu}_t^{i,N}$ are defined as follows. 
	\begin{align}
		\boldsymbol{\mu}_t^{i,N}(x) \triangleq \sum_{j=1}^N W(i,j)\delta(x_t^j=x), ~\forall x\in \mathcal{X},\forall t\in \mathbb{T}
		\label{mu_i}
		\\
		\boldsymbol{\nu}_t^{i,N}(u) \triangleq \sum_{j=1}^N W(i,j)\delta(u_t^j=u), ~\forall u\in \mathcal{U}, \forall t\in \mathbb{T}
		\label{nu_i}
	\end{align}
	
	The function $W:[N]\times [N]\rightarrow [0,1]$ dictates the influence of one agent on another. In particular, $W(i, j)$ specifies how $j$-th agent influences $i$-th agent's reward and transition functions. Observe that, for $\boldsymbol{\mu}_t^{i,N}$, and $\boldsymbol{\nu}_t^{i,N}$ to be probability distributions, $W$ must be right-stochastic i.e.,
	\begin{align}
		\sum_{j=1}^N W(i,j) = 1, ~\forall i\in \{1,\cdots,N\}
		\label{W_sum_j}
	\end{align}
	
	In summary, the reward received by the $i$-th agent at time $t$ can be expressed as $r(x_t^i, u_t^i, \boldsymbol{\mu}_t^{i,N}, \boldsymbol{\nu}_t^{i,N})$. Moreover, the state of the agent at time $t+1$ is decided by the following probability law: $x_{t+1}^i\sim P(x_t^i, u_t^i, \boldsymbol{\mu}_t^{i,N}, \boldsymbol{\nu}_t^{i,N})$. We would like to point out that, in contrast to our framework, existing works assume reward and state transition to be functions of $\boldsymbol{\mu}_t^N, \boldsymbol{\nu}_t^N$, thereby making the influence of population to be identical for every agent \citep{mondal2021approximation, gu2021mean}. If we take the influence function $W$ to be uniform i.e., $W(i, j)= 1/N$, $\forall i, j \in [N]$, then $\forall i\in [N]$, $\boldsymbol{\mu}_t^{i,N}=\boldsymbol{\mu}_t^N$, and $\boldsymbol{\nu}_t^{i,N}=\boldsymbol{\nu}_t^N$,  which forces our framework to collapse onto that described in the above mentioned papers.
	
	At time $t\in \mathbb{T}$, each agent is also presumed to have a policy function $\pi_t: \mathcal{X} \times \mathcal{P}(\mathcal{X})\rightarrow \mathcal{P}(\mathcal{U})$ that maps $(x_t^i, \boldsymbol{\mu}_t^{i,N})$ to a distribution over the action space, $\mathcal{U}$. In simple words, a policy function $\pi_t$ is a rule that (probabilistically) dictates what action must be chosen by an agent given its current state and the mean-distribution of the population as observed by the agent. Note that the policy function is presumed to be the same for all the agents as the reward function, $r$ and the transition function, $P$ is taken to homogeneous across the population. Homogeneity of $r, P$ is a common assumption in the mean-field literature \citep{gu2021mean, vasal2021sequential}.
	
	For a given set of initial states $\boldsymbol{x}_0\triangleq \{x_0^i\}_{i\in\mathbb{N}}$, the value of the sequence of policies, $\boldsymbol{\pi}\triangleq\{\pi_t\}_{t\in \mathbb{T}}$, for the $i$-th agent is defined as follows.
	\begin{align}
		v_i(\boldsymbol{x}_0, \boldsymbol{\pi}) \triangleq \sum_{t\in \mathbb{T}} \gamma^t \mathbb{E}\left[ r\left(x_t^i, u_t^i, \boldsymbol{\mu}_t^{i,N}, \boldsymbol{\nu}_t^{i,N}\right) \right]
	\end{align}
	where $\boldsymbol{\mu}_t^{i,N}, \boldsymbol{\nu}_t^{i,N}$ are defined by $(\ref{mu_i}), (\ref{nu_i})$, respectively, and the expectation is computed over all the state-action trajectories generated by the transition function $P$ and the sequence of policy functions, $\boldsymbol{\pi}$. The term, $\gamma\in [0, 1]$ is called the time discount factor. We would like to emphasize that the value function $v_i$ is dependent on the interaction matrix $W$ (because so are $\boldsymbol{\mu}_t^{i,N}$, and $\boldsymbol{\nu}_t^{i,N}$). However, such dependence is not explicitly shown to keep the notation uncluttered. The average value function of the entire population is expressed as below.
	\begin{align}
		v_{\mathrm{MARL}}(\boldsymbol{x}_0, \boldsymbol{\pi}) = \dfrac{1}{N} \sum_{i=1}^N v_i(\boldsymbol{x}_0, \boldsymbol{\pi})
		\label{v_MARL}
	\end{align}  
	
 The goal of MARL is to maximize $v_{\mathrm{MARL}}(\boldsymbol{x}_0, .)$ over all policy sequences $\boldsymbol{\pi}$. Such an optimization is hard to solve in general, especially for large $N$.
	
	Before concluding this section, we would like to point out two important observations that will be extensively used in many of our forthcoming results.
	
	\begin{remark} $\forall t\in \mathbb{T}$, the random variables $\{u_t^i\}_{i\in[N]}$ are conditionally independent given $\{x_t^i\}_{i\in[N]}$. In other words, given current states, each agent chooses its action independent of each other.
	\end{remark}
	
	\begin{remark}
		$\forall t\in \mathbb{T}$, the random variables $\{x_{t+1}^i\}_{i\in[N]}$ are conditionally independent given $\{x_t^i\}_{i\in[N]}$, and $\{u_t^i\}_{i\in[N]}$. In other words, given current states and actions, the next state of each agent evolves independent of each other.
	\end{remark}
	
	\section{Mean-Field Control}
	\label{sec:MFC}
	
	MFC is an approximation method of $N-$agent MARL that takes away many of the complexities of the later. The main idea of MFC is to consider an infinite population of homogeneous agents, instead of a finite population as considered in MARL. The advantage of such presumption is that it allows us to draw accurate inferences about the whole population by tracking only a single representative agent. Unfortunately, as stated before, such approximation method is known to work \citep{gu2021mean} when the interactions between different agents are uniform, i.e., $W(i, j)=1/N$, $\forall i,j \in [N]$. In this article, we shall show that, under certain conditions, we can show MFC as an approximation of MARL, even with non-uniform $W$. Below we describe the MFC method.
	
	As explained above, in MFC, we only need to track a single representative agent. Let the state and action of the agent at time $t$ be denoted as $x_t$, and $u_t$ respectively. Also, let $\boldsymbol{\mu}_t$, $\boldsymbol{\nu}_t$ be the state, and action distributions of the infinite population at time $t$. The reward and state transition laws of the representative at time $t$ are denoted as $r(x_t,u_t,\boldsymbol{\mu}_t,\boldsymbol{\nu}_t)$ and $P(x_t,u_t,\boldsymbol{\mu}_t,\boldsymbol{\nu}_t)$, respectively. For a given policy sequence $\boldsymbol{\pi}\triangleq \{\pi_t\}_{t\in \mathbb{T}}$, the action distribution $\boldsymbol{\nu}_t$ can be expressed as a deterministic function of the state distribution, $\boldsymbol{\mu}_t$ as follows.
	\begin{align}
		\boldsymbol{\nu}_t = \nu^{\mathrm{MF}}(\boldsymbol{\mu}_t, \pi_t) \triangleq \sum_{x\in \mathcal{X}} \pi_t(x, \boldsymbol{\mu}_t)\boldsymbol{\mu}_t(x)
		\label{nu_MF}
	\end{align}
	
	In a similar fashion, the state distribution at time $t+1$ can be written as a deterministic function of $\boldsymbol{\mu}_t$ as follows.
	\begin{align}
		\begin{split}
			\boldsymbol{\mu}_{t+1} &=  P^{\mathrm{MF}}(\boldsymbol{\mu}_t,\pi_t)\\ 
			&\triangleq 
			\sum_{x\in \mathcal{X}}\sum_{u\in\mathcal{U}} P(x, u, \boldsymbol{\mu}_t, \nu^{\mathrm{MF}}(x, \boldsymbol{\mu}_t))\\
			& \hspace{2cm}\times\pi_t(x, \boldsymbol{\mu}_t)(u)\boldsymbol{\mu}_t(x)
		\end{split}
		\label{mu_t_plus_1}
	\end{align}
	
	For an initial state distribution $\boldsymbol{\mu}_0$, the value of a sequence of policies $\boldsymbol{\pi}\triangleq \{\pi_t\}_{t\in \mathbb{T}}$, is defined as written below.
	\begin{align}
		\begin{split}
			&v_{\mathrm{MF}}(\boldsymbol{\mu}_0, \boldsymbol{\pi}) \triangleq \sum_{t\in \mathbb{T}} \gamma^t r^{\mathrm{MF}}\left(\boldsymbol{\mu}_t, \pi_t\right),\\
			\text{where}~& r^{\mathrm{MF}}(\boldsymbol{\mu}_t, \pi_t) \triangleq \sum_{x\in \mathcal{X}}\sum_{u\in\mathcal{U}} r(x, u, \boldsymbol{\mu}_t, \nu^{\mathrm{MF}}(\boldsymbol{\mu}_t, \pi_t))\\
			&\hspace{2.5cm}\times\pi_t(x, \boldsymbol{\mu}_t)(u)\boldsymbol{\mu}_t(x)
		\end{split}
		\label{v_MF}
	\end{align}
	
	The term $r^{\mathrm{MF}}(\boldsymbol{\mu}_t, \pi_t)$ indicates the average reward of the population. Alternatively, it can also be expressed as the ensemble average of the reward of the representative agent i.e., $r^{\mathrm{MF}}(\boldsymbol{\mu}_t, \pi_t) = \mathbb{E}[r(x_t, u_t, \boldsymbol{\mu}_t, \boldsymbol{\nu}_t)]$ where the expectation is computed over all possible states $x_t\sim \boldsymbol{\mu}_t$, and actions $u_t\sim \pi_t(x_t,\boldsymbol{\mu}_t)$ at time $t$. The mean distributions $\boldsymbol{\mu}_t, \boldsymbol{\nu}_t$ are sequentially determined by $(\ref{nu_MF}), (\ref{mu_t_plus_1})$ from a given initial state distribution, $\boldsymbol{\mu}_0$.

	The goal of MFC is to maximize $v_{\mathrm{MF}}(\boldsymbol{\mu}_0, \cdot)$ over all policy sequences. In the next section, we shall demonstrate that, under certain conditions, $v_{\mathrm{MARL}}$ is well-approximated by $v_{\mathrm{MF}}$. Therefore, in order to solve MARL, it is sufficient to solve its associated MFC.
	
	It is worthwhile to point out that $\boldsymbol{\mu}_t, \boldsymbol{\nu}_t$ can be thought of as limiting values of the empirical distributions $\boldsymbol{\mu}_t^N, \boldsymbol{\nu}_t^N$ in the asymptotic limit of infinite population. Note that, $\boldsymbol{\mu}_t^N, \boldsymbol{\nu}_t^N$ and thereby, $\boldsymbol{\mu}_t, \boldsymbol{\nu}_t$ are NOT dependent on $W$. This makes the MFC problem agnostic of $W$. In contrary, agents in the $N-$agent MARL problem are influenced by the weighted mean-field distribution $\{\boldsymbol{\mu}_t^{i,N}, \boldsymbol{\nu}_t^{i,N}\}_{i\in[N]}$ which do depend on $W$ via $(\ref{mu_i}), (\ref{nu_i})$.  Therefore, unlike in the existing works, the mean-field representative in our case cannot be described as a randomly chosen typical agent in the limit $N\rightarrow \infty$. The concept of mean-field representative, in our work, is a useful construct that, under certain conditions, can provide well-approximated solution to MARL. 
	
	In the next section, we describe how these seemingly incompatible frameworks, namely non-uniform MARL where the behaviour of agents are dependent on $W$, and the framework of $W$-agnostic MFC, can be merged together.
	
	\section{MFC as an approximation to Non-Uniform MARL}
	Before formally stating our main result, we would like to describe the assumptions that the result is grounded upon. Our first assumption is on the structure of state-transition function.
	
	\begin{assumption}
		The state-transition function $P$ is Lipschitz continuous with parameter $L_P$ with respect to the mean-distribution arguments. Mathematically, the inequality, 
		\begin{align*}
			|P(x, u, &\boldsymbol{\mu}_1, \boldsymbol{\nu}_1) - P(x, u, \boldsymbol{\mu}_2, \boldsymbol{\nu}_2)|_1 \\
			&\leq L_P[|\boldsymbol{\mu}_1-\boldsymbol{\mu}_2|_1 + |\boldsymbol{\nu}_1-\boldsymbol{\nu}_2|_1 ]
		\end{align*}
		holds $\forall x \in \mathcal{X}, \forall u\in \mathcal{U}$, $\forall \boldsymbol{\mu}_1,\boldsymbol{\mu}_2 \in \mathcal{P}(\mathcal{X})$ and $\forall \boldsymbol{\nu}_1, \boldsymbol{\nu}_2\in \mathcal{P}(\mathcal{U})$. The symbol $|\cdot|_1$ denotes $L_1$ norm.
		\label{assumption_1}
	\end{assumption}
	
	Assumption \ref{assumption_1} states that the transition function, $P$, is Lipschitz continuous with respect to its mean-field arguments. Essentially, this implies that if the state-distribution changes from $\boldsymbol{\mu}$ to $\boldsymbol{\mu} + \Delta \boldsymbol{\mu}$, then the corresponding change in the transition-function can be bounded by a term proportional to $|\Delta \boldsymbol{\mu}|_1$. Similar property holds for the change in the action-distribution. This useful assumption commonly appears in the mean-field literature \citep{gu2021mean, mondal2021approximation, carmona2018probabilistic}.
	
	The second assumption is on the structure of $r$, the reward function.
	\begin{assumption}
		The reward function, $r$ is affine with respect to  mean-distribution arguments. Mathematically, for some $\boldsymbol{a}\in \mathbb{R}^{|\mathcal{X}|}, \boldsymbol{b}\in\mathbb{R}^{|\mathcal{U}|}$, and $f:\mathcal{X}\times \mathcal{U}\rightarrow \mathbb{R}$, the equality,
		\begin{align*}
			r(x, u, \boldsymbol{\mu}, \boldsymbol{\nu}) = \boldsymbol{a}^T\boldsymbol{\mu} + \boldsymbol{b}^T\boldsymbol{\nu} + f(x, u)
		\end{align*}
		holds $\forall x\in \mathcal{X}$, $\forall u\in \mathcal{U}$, $\forall \boldsymbol{\mu}\in \mathcal{P}(\mathcal{X})$, and $\forall \boldsymbol{\nu}\in \mathcal{P}(\mathcal{U})$. 
		\label{assumption_2}
	\end{assumption}
	
	Assumption \ref{assumption_2} dictates that the reward is an affine function of the mean-field distributions. Although this assumption does not allow us to encapsulate a large variety of reward functions, we would like to point out that the behaviour of agents in multitude of social and economic networks can be modeled via affine rewards (refer the examples given in \citep{chen2021agent}). We shall provide one explicit example at the end of this section. We would also like to reiterate that the benefit of this seemingly restrictive assumption of affine reward is it allows us to apply the principles of MFC to an arbitrarily interacting $N$-agent system which is notoriously complex to solve in general.
	
	The immediate corollary of Assumption \ref{assumption_2} is that the reward function is bounded and Lipschitz continuous. The formal proposition is given below.
	
	\begin{corollary}
		If the reward function, $r$ satisfies Assumption $\ref{assumption_2}$, then for some $M_R, L_R>0$, the following holds
		\begin{align*}
			(a) &|r(x, u, \boldsymbol{\mu}_1, \boldsymbol{\nu}_1)|\leq M_R, \\ 
			(b) &|r(x, u, \boldsymbol{\mu}_1, \boldsymbol{\nu}_1) - r(x, u, \boldsymbol{\mu}_2, \boldsymbol{\nu}_2)| \\
			&\hspace{2cm} \leq 	L_R\left[|\boldsymbol{\mu}_1-\boldsymbol{\mu}_2|_1 + |\boldsymbol{\nu}_1-\boldsymbol{\nu}_2|_1\right]
		\end{align*}	
		$\forall x\in \mathcal{X}$, $\forall u\in \mathcal{U}$, $\forall \boldsymbol{\mu}_1,\boldsymbol{\mu}_2\in \mathcal{P}(\mathcal{X})$, and $\forall \boldsymbol{\nu}_1, \boldsymbol{\nu}_2\in \mathcal{P}(\mathcal{U})$. 
		\label{corollary_1}
	\end{corollary}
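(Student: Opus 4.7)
The plan is to extract both bounds directly from the affine decomposition $r(x,u,\boldsymbol{\mu},\boldsymbol{\nu}) = \boldsymbol{a}^T\boldsymbol{\mu} + \boldsymbol{b}^T\boldsymbol{\nu} + f(x,u)$ guaranteed by Assumption \ref{assumption_2}, using nothing more than H\"older's inequality on finite-dimensional vectors together with the fact that probability distributions have unit $L_1$ norm. Since $\mathcal{X}$ and $\mathcal{U}$ are finite, the constants $\|\boldsymbol{a}\|_\infty$, $\|\boldsymbol{b}\|_\infty$, and $\max_{x,u}|f(x,u)|$ are all finite, which will supply the required $M_R$ and $L_R$.

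For part (a), I would apply the triangle inequality to write $|r(x,u,\boldsymbol{\mu}_1,\boldsymbol{\nu}_1)| \leq |\boldsymbol{a}^T\boldsymbol{\mu}_1| + |\boldsymbol{b}^T\boldsymbol{\nu}_1| + |f(x,u)|$, then bound each inner product by H\"older: $|\boldsymbol{a}^T\boldsymbol{\mu}_1| \leq \|\boldsymbol{a}\|_\infty |\boldsymbol{\mu}_1|_1 = \|\boldsymbol{a}\|_\infty$ since $\boldsymbol{\mu}_1 \in \mathcal{P}(\mathcal{X})$, and similarly for $\boldsymbol{b}^T\boldsymbol{\nu}_1$. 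Setting $M_R \triangleq \|\boldsymbol{a}\|_\infty + \|\boldsymbol{b}\|_\infty + \max_{x,u}|f(x,u)|$ then finishes part (a).

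For part (b), I would first observe that the $f(x,u)$ term and the dependence on the fixed $(x,u)$ cancel when subtracting, leaving
\begin{align*}
r(x,u,\boldsymbol{\mu}_1,\boldsymbol{\nu}_1) - r(x,u,\boldsymbol{\mu}_2,\boldsymbol{\nu}_2) = \boldsymbol{a}^T(\boldsymbol{\mu}_1-\boldsymbol{\mu}_2) + \boldsymbol{b}^T(\boldsymbol{\nu}_1-\boldsymbol{\nu}_2).
\end{align*}
Applying the triangle inequality followed by H\"older's inequality to each term yields $|\boldsymbol{a}^T(\boldsymbol{\mu}_1-\boldsymbol{\mu}_2)| \leq \|\boldsymbol{a}\|_\infty |\boldsymbol{\mu}_1-\boldsymbol{\mu}_2|_1$ and $|\boldsymbol{b}^T(\boldsymbol{\nu}_1-\boldsymbol{\nu}_2)| \leq \|\boldsymbol{b}\|_\infty |\boldsymbol{\nu}_1-\boldsymbol{\nu}_2|_1$, so choosing $L_R \triangleq \max\{\|\boldsymbol{a}\|_\infty, \|\boldsymbol{b}\|_\infty\}$ gives the claimed Lipschitz bound.

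There is no real obstacle here: the whole corollary is essentially a one-line consequence of the affine ansatz plus H\"older, so the only thing worth being careful about is correctly identifying the constants $M_R$ and $L_R$ in terms of $\|\boldsymbol{a}\|_\infty$, $\|\boldsymbol{b}\|_\infty$, and $\sup|f|$, and noting explicitly that finiteness of $\mathcal{X},\mathcal{U}$ is what guarantees each of these constants is finite.
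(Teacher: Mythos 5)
Your proposal is correct and follows essentially the same argument as the paper: triangle inequality plus a H\"older-type bound on the affine terms, using that probability distributions have unit $L_1$ norm and that finiteness of $\mathcal{X},\mathcal{U}$ bounds $f$. The only (immaterial) difference is that you pair the $L_1$ norm of the distributions with $\|\boldsymbol{a}\|_\infty,\|\boldsymbol{b}\|_\infty$, giving slightly tighter constants than the paper's choice $M_R=|\boldsymbol{a}|_1+|\boldsymbol{b}|_1+M_F$ and $L_R=\max\{|\boldsymbol{a}|_1,|\boldsymbol{b}|_1\}$.
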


	The third assumption concerns the set of allowable policy functions.
	\begin{assumption}
		The set of allowable policy functions, $\Pi$, is such that each of its element is Lipschitz continuous with respect to its mean-state distribution argument. Mathematically, $\forall \pi\in \Pi$, the following inequality holds
		\begin{align*}
			|\pi(x, \boldsymbol{\mu}_1) - \pi(x, \boldsymbol{\mu}_2)|_1 \leq L_Q|\boldsymbol{\mu}_1-\boldsymbol{\mu}_2|_1
		\end{align*}
		for some $L_Q>0$ and $\forall x\in \mathcal{X}$, $\forall \boldsymbol{\mu}_1, \boldsymbol{\mu}_2\in \mathcal{P}(\mathcal{X})$.
		\label{assumption_3}
	\end{assumption}
	
	Assumption \ref{assumption_3} states that the allowable policy functions must be Lipschitz continuous with respect to its state-distribution argument. Such requirement typically holds for neural network based policies and are commonly presumed to be true in the literature \citep{gu2021mean, cui2021learning, pasztor2021efficient}. 
	
	The final assumption imposes some constraints on the interaction function, $W$.
	\begin{assumption}
		The interaction function, $W$ is such that, 
		\begin{align}
			\sum_{i=1}^N W(i, j) = 1, ~\forall j \in \{1, \cdots, N\}
		\end{align}
		In conjunction with $(\ref{W_sum_j})$, this assumption implies that $W$ is doubly-stochastic.
		\label{assumption_4}
	\end{assumption}
	
	Assumption $\ref{assumption_4}$ requires $W$ to be an $N\times N$ doubly stochastic matrix (DSM). Such presumption is commonly applied in many multi-agent tasks, e.g., distributed consensus \citep{alaviani2019distributedacc}, distributed optimization \citep{alaviani2019distributed}, and multi-agent learning \citep{wai2018multi}.
	
	We now state our main result.
	
	\begin{theorem}
		Let, $\boldsymbol{x}_0\triangleq \{x_0^i\}_{i\in [N]}$ be the initial states in an $N$-agent non-uniform MARL problem and $\boldsymbol{\mu}_0$ be its associated empirical distribution defined by $(\ref{mu})$. Assume $\Pi$ to be a set of policies that obeys Assumption \ref{assumption_3}, and $\boldsymbol{\pi}\triangleq\{\pi_t\}_{t\in\mathbb{T}}$ is a sequence of policies such that $\pi_t\in \Pi$, $\forall t\in \mathbb{T}$. If Assumption \ref{assumption_1}, \ref{assumption_2} and $\ref{assumption_4}$ hold, then 
		\begin{align}
			\begin{split}
				&|v_{\mathrm{MARL}}(\boldsymbol{x}_0, \boldsymbol{\pi}) - v_{\mathrm{MF}}(\boldsymbol{\mu}_0, \boldsymbol{\pi})| \leq  C_R \dfrac{\sqrt{|\mathcal{U}|}}{\sqrt{N}}\dfrac{1}{1-\gamma}\\
				&+ \dfrac{1}{\sqrt{N}}\left[\sqrt{|\mathcal{X}|}+\sqrt{|\mathcal{U}|}\right]\dfrac{S_RC_P}{S_P-1} \left[\dfrac{1}{1-\gamma S_P}-\dfrac{1}{1-\gamma}\right]
			\end{split}
		\end{align}
		whenever $\gamma S_P<1$ where $S_P\triangleq (1+L_Q)+L_P(2+L_Q)$, $S_R\triangleq M_R(1+L_Q) + L_R(2+L_Q)$, $C_P\triangleq 2+L_P$, and $C_R\triangleq |\boldsymbol{b}|_1 + M_F$. The parameters $L_P, \boldsymbol{b}, L_Q, L_R, M_R$ have been defined in Assumption \ref{assumption_1}, \ref{assumption_2}, \ref{assumption_3}, and Corollary \ref{corollary_1}, respectively. The term $M_F$ is such that $|f(x,u)|\leq M_F$, $\forall x\in \mathcal{X}$, $\forall u\in \mathcal{U}$ where $f$ is stated in Assumption \ref{assumption_2}. The functions $v_{\mathrm{MARL}}$, and $v_{\mathrm{MF}}$ are defined in $(\ref{v_MARL}), (\ref{v_MF})$ respectively.
		\label{theorem_1}
	\end{theorem}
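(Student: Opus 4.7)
The plan is to start from the decomposition
\begin{align*}
|v_{\mathrm{MARL}}(\boldsymbol{x}_0, \boldsymbol{\pi}) - v_{\mathrm{MF}}(\boldsymbol{\mu}_0, \boldsymbol{\pi})| \leq \sum_{t\in\mathbb{T}} \gamma^t \Delta_t,
\end{align*}
where $\Delta_t \triangleq \bigl|\frac{1}{N}\sum_i \mathbb{E}[r(x_t^i, u_t^i, \boldsymbol{\mu}_t^{i,N}, \boldsymbol{\nu}_t^{i,N})] - r^{\mathrm{MF}}(\boldsymbol{\mu}_t, \pi_t)\bigr|$, so the task reduces to controlling the per-step reward discrepancy and then summing the resulting series with geometric discount.

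The first step is the \emph{affine reduction} that is the content of Lemma~7 referenced in the excerpt. Substituting $r(x,u,\boldsymbol{\mu},\boldsymbol{\nu}) = \boldsymbol{a}^T\boldsymbol{\mu} + \boldsymbol{b}^T\boldsymbol{\nu} + f(x,u)$ from Assumption~\ref{assumption_2} and averaging over agents, the contribution $\frac{1}{N}\sum_i \boldsymbol{a}^T \boldsymbol{\mu}_t^{i,N}$ collapses to $\boldsymbol{a}^T\boldsymbol{\mu}_t^N$ by Assumption~\ref{assumption_4}, because $\sum_i W(i,j) = 1$ for every $j$; the analogous collapse occurs for the $\boldsymbol{b}$-term. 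Thus the $W$-dependence disappears at the reward level:
\begin{align*}
\frac{1}{N}\sum_i r(x_t^i, u_t^i, \boldsymbol{\mu}_t^{i,N}, \boldsymbol{\nu}_t^{i,N}) = \boldsymbol{a}^T\boldsymbol{\mu}_t^N + \boldsymbol{b}^T\boldsymbol{\nu}_t^N + \frac{1}{N}\sum_i f(x_t^i, u_t^i).
\end{align*}
Comparing the expectation of the right-hand side to $r^{\mathrm{MF}}(\boldsymbol{\mu}_t, \pi_t)$, $\Delta_t$ decomposes into a deterministic piece controlled by $\mathbb{E}|\boldsymbol{\mu}_t^N - \boldsymbol{\mu}_t|_1$ and a concentration piece of order $C_R\sqrt{|\mathcal{U}|/N}$ arising (via Remark~1 and the conditional independence of actions) when the empirical action distribution is replaced by its policy-averaged counterpart. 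Summed against $\sum_t \gamma^t$, the latter produces exactly the first summand $C_R\sqrt{|\mathcal{U}|/N}\,/(1-\gamma)$ of the theorem.

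Second, I would establish the one-step recursion
\begin{align*}
\mathbb{E}|\boldsymbol{\mu}_{t+1}^N - \boldsymbol{\mu}_{t+1}|_1 \leq S_P\,\mathbb{E}|\boldsymbol{\mu}_t^N - \boldsymbol{\mu}_t|_1 + \frac{C_P}{\sqrt{N}}\bigl(\sqrt{|\mathcal{X}|} + \sqrt{|\mathcal{U}|}\bigr).
\end{align*}
Conditioning on $\mathcal{F}_t$ and invoking Remark~2, $\boldsymbol{\mu}_{t+1}^N$ concentrates around its conditional mean $\frac{1}{N}\sum_j P(x_t^j, u_t^j, \boldsymbol{\mu}_t^{j,N}, \boldsymbol{\nu}_t^{j,N})$ with deviation of order $\sqrt{|\mathcal{X}|/N}$. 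The conditional mean is then compared to $P^{\mathrm{MF}}(\boldsymbol{\mu}_t, \pi_t)$ by sequentially replacing $\boldsymbol{\nu}_t^{j,N}$ with its policy-generated version (Remark~1 plus Lipschitzness of $P$ via Assumption~\ref{assumption_1}, contributing $L_P\sqrt{|\mathcal{U}|/N}$) and $\boldsymbol{\mu}_t^{j,N}$ with $\boldsymbol{\mu}_t$ (Lipschitzness of $P$ and of $\pi_t$ via Assumption~\ref{assumption_3}, contributing the multiplicative constant $S_P$).

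Unrolling from $\mathbb{E}|\boldsymbol{\mu}_0^N - \boldsymbol{\mu}_0|_1 = 0$ gives $\mathbb{E}|\boldsymbol{\mu}_t^N - \boldsymbol{\mu}_t|_1 \leq \frac{C_P}{\sqrt{N}}(\sqrt{|\mathcal{X}|}+\sqrt{|\mathcal{U}|})\cdot\frac{S_P^t - 1}{S_P - 1}$. Inserting this into the per-step reward bound, collecting Lipschitz and boundedness constants into $S_R = M_R(1+L_Q)+L_R(2+L_Q)$ and $C_R = |\boldsymbol{b}|_1 + M_F$, and summing $\sum_t \gamma^t S_P^t$ (convergent because $\gamma S_P < 1$) yields the second summand of the theorem. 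The principal obstacle is closing the recursion: since $P$ is only Lipschitz and not affine, replacing $\boldsymbol{\mu}_t^{j,N}$ by $\boldsymbol{\mu}_t^N$ inside $P$ costs $L_P|\boldsymbol{\mu}_t^{j,N} - \boldsymbol{\mu}_t^N|_1$, a quantity that need not be small for individual $j$. The resolution is to short-circuit that intermediate step and compare $\boldsymbol{\mu}_t^{j,N}$ directly to the MFC distribution $\boldsymbol{\mu}_t$; the doubly stochastic structure of $W$ then allows $\frac{1}{N}\sum_j \mathbb{E}|\boldsymbol{\mu}_t^{j,N}-\boldsymbol{\mu}_t|_1$ to be bounded in terms of $\mathbb{E}|\boldsymbol{\mu}_t^N - \boldsymbol{\mu}_t|_1$ plus an $N^{-1/2}$-size sampling correction, keeping the recursion closed without any $W$-dependent constants.
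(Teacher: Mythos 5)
Your outline follows the paper's architecture almost exactly — the same per-step decomposition of $|v_{\mathrm{MARL}}-v_{\mathrm{MF}}|$, the collapse of $\frac{1}{N}\sum_i \boldsymbol{a}^T\boldsymbol{\mu}_t^{i,N}$ and $\frac{1}{N}\sum_i\boldsymbol{b}^T\boldsymbol{\nu}_t^{i,N}$ via double stochasticity (the content of Lemma \ref{lemma_7}), the Lipschitz estimates for $r^{\mathrm{MF}}$ and $P^{\mathrm{MF}}$, the conditional-independence concentration bounds, and the geometric recursion for $\mathbb{E}|\boldsymbol{\mu}_t^N-\boldsymbol{\mu}_t|_1$ summed under $\gamma S_P<1$. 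The genuine gap is in the step you yourself flag as the principal obstacle. You close the recursion by claiming that double stochasticity of $W$ gives
\begin{align*}
\frac{1}{N}\sum_{j=1}^N\mathbb{E}\left|\boldsymbol{\mu}_t^{j,N}-\boldsymbol{\mu}_t\right|_1 \;\lesssim\; \mathbb{E}\left|\boldsymbol{\mu}_t^{N}-\boldsymbol{\mu}_t\right|_1 + \mathcal{O}(N^{-1/2}).
\end{align*}
This is false. Double stochasticity yields only the identity $\frac{1}{N}\sum_j \boldsymbol{\mu}_t^{j,N}=\boldsymbol{\mu}_t^N$, and by convexity of the $L_1$ norm this gives $|\boldsymbol{\mu}_t^N-\boldsymbol{\mu}_t|_1\le \frac{1}{N}\sum_j|\boldsymbol{\mu}_t^{j,N}-\boldsymbol{\mu}_t|_1$ — the inequality you need, but pointing the wrong way. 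Concretely, take $W$ to be the identity matrix (doubly stochastic), $|\mathcal{X}|=2$, and half the agents in each state: every $\boldsymbol{\mu}_t^{j,N}$ is a point mass, so the left-hand side is $1$, while $|\boldsymbol{\mu}_t^N-\boldsymbol{\mu}_t|_1$ can be $0$. Since $P$ is only Lipschitz and not affine, no averaging over $j$ washes the $W$-dependence out of the transition term the way it does for the reward, so your recursion does not close as written (and any bound that did close it without $W$-dependent constants should raise suspicion for sparse $W$).

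For comparison, the paper never attempts the bound you propose: in its Lemma \ref{lemma_6} the conditional law of $x_{t+1}^i$ given $(\boldsymbol{x}_t,\boldsymbol{u}_t)$ is taken to be $P(x_t^i,u_t^i,\boldsymbol{\mu}_t^N,\boldsymbol{\nu}_t^N)$ with the \emph{unweighted} empirical arguments, so the per-agent comparison of $\boldsymbol{\mu}_t^{j,N}$ against $\boldsymbol{\mu}_t$ simply does not arise there; the non-uniform interaction is confronted only at the reward level (Lemma \ref{lemma_7}), where Assumption \ref{assumption_2} makes $W$ enter linearly and Assumption \ref{assumption_4} removes it exactly. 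If you insist on transitions genuinely driven by $(\boldsymbol{\mu}_t^{i,N},\boldsymbol{\nu}_t^{i,N})$ with $P$ merely Lipschitz, you would need an additional structural ingredient (e.g., affine dependence of $P$ on the mean-field arguments, or $W$ close to uniform in a quantified sense) rather than the inequality above; as it stands, that single bridge is the part of your proposal that fails, while the rest reproduces the paper's argument.
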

	
	Theorem \ref{theorem_1} has an important implication. Specifically, it states that, if reward and transition functions respectively are affine and Lipschitz continuous functions of the mean-distributions, and the interaction between the agents is described by a DSM, then the solution of MFC is at most $\mathcal{O}(1/\sqrt{N})$ error away from the solution of the non-uniform MARL problem. Therefore, the larger the number of agents, the better is the MFC-based approximation. It also describes how the approximation error changes with the sizes of the state, and action spaces. Specifically, if all other parameters are kept fixed, then the error increases as $\mathcal{O}(\sqrt{|\mathcal{X}|}+\sqrt{|\mathcal{U}|})$. In other words, if individual state and action spaces are large, then MFC may not be a good approximation to non-uniform MARL.
	
	Now we shall discuss one example where the reward, transition function and the interaction function satisfy Assumption \ref{assumption_1}, \ref{assumption_2}, and \ref{assumption_4} respectively.
	
	\begin{example}
		A version of this model has been adapted in \citep{subramanian2019reinforcement} and \citep{chen2021agent}. Consider a network of $N$ firms operated by a single operator. All of the firms produce the same product but with varying quality. A discrete set $\mathcal{X}\triangleq\{1,2,\cdots, Q\}$  (state-space) describes the possible levels of quality of the product. At each time instant, each firm decides whether to invest to improve the quality of its product which leads to the following action set: $\mathcal{U}=\{0, 1\}$. If at time $t$, the $i$-th firm decides to invest, i.e., $u_t^i=1$, its current quality, $x_t^i$, improves according to the following transition law.
		\begin{align*}
			x_{t+1}^i = 
			\begin{cases}
				x_t^i + \left\lfloor \chi\left(1-\dfrac{\bar{\boldsymbol{\mu}}_t^{i,N}}{Q}\right)  (Q-x_t^i)\right\rfloor~ \text{if}~ u_t^i = 1, \\
				x_t^i~~\hspace{4.4cm}\text{otherwise}
			\end{cases}
		\end{align*}
		where $\chi$ is a uniform random variable between $[0,1]$, and $\bar{\boldsymbol{\mu}}_t^{i,N}$ is average product quality of its $K<N$ neighbouring firms. The intuition is that improving product quality might be difficult if the quality maintained in the local economy is high. Formally, we assume that each firm equally influences and is influenced by $K$ other firms. Hence, $W(i,j)=1/K$ for all $i, j\in [N]$ that influence each other and $W(i,j)=0$ otherwise. The local average product quality is computed as, $\bar{\boldsymbol{\mu}}_t^{i,N} \triangleq \sum_{x\in \mathcal{X}} x\boldsymbol{\mu}_t^{i,N}(x)$ where $\boldsymbol{\mu}_t^{i,N}$ is given in $(\ref{mu_i})$. At time $t$, the $i$-th firm earns a positive reward, $\alpha_R x_t^i$ due to its revenue, a negative reward, $\beta_R\bar{\boldsymbol{\mu}}_t^{i,N}$ due to the average local quality, and a cost $\lambda_R u_t^i$ due to investment. Hence, the total reward can be expressed as follows.
		\begin{align*}
			r(x_t^i, u_t^i,\boldsymbol{\mu}_t^{i,N}, \boldsymbol{\nu}_t^{i,N}) = \alpha_R x_t^i - \beta_R\bar{\boldsymbol{\mu}}_t^{i,N} -  \lambda_R u_t^i
		\end{align*}
		
		Clearly, in this example, Assumption \ref{assumption_1}, \ref{assumption_2}, and \ref{assumption_4} are satisfied.
		\label{example_1}
	\end{example}

	\subsection{Proof Outline}
	In this subsection, we shall provide a brief sketch of the proof of Theorem \ref{theorem_1}.
	
	\textit{Step 0}: The difference between $v_{\mathrm{MARL}}$ and $v_{\mathrm{MF}}$ is essentially the time-discounted sum of differences between the average $N$-agent reward and average mean-field (MF) reward at time $t$. Our first goal, therefore, is to estimate the difference between these rewards.
	
	\textit{Step 1}: Average $N$-agent reward at $t$ depends on weighted empirical distributions $\{\boldsymbol{\mu}_t^{i,N}\}_{i\in [N]}$, $\{\boldsymbol{\nu}_t^{i,N}\}_{i\in [N]}$ whereas average MF reward depends on the distributions $\boldsymbol{\mu}_t, \boldsymbol{\nu}_t$. To estimate their difference, we first compute the difference between average $N$-agent reward at $t$ and average MF reward at the same instant generated from the distribution $\boldsymbol{\mu}_t^N$. This estimate is provided by Lemma \ref{lemma_7} in the Appendix. Assumption \ref{assumption_2} is invoked to establish this result.
	
	\textit{Step 2}:  Next we estimate the difference between the average MF reward generated by $\boldsymbol{\mu}_t^N$ and that generated by $\boldsymbol{\mu}_t$. Lemma \ref{lemma_3} in the Appendix bounds this difference by a term proportional to $|\boldsymbol{\mu}_t^N-\boldsymbol{\mu}_t|$.
	
	\textit{Step 3}: Using Lemma \ref{lemma_2} and \ref{lemma_6}, we now establish a recursive relation on $|\boldsymbol{\mu}_t^N-\boldsymbol{\mu}_t|$. Via induction, we can now write this difference as a function of $t$.
	
	\textit{Step 4}: Finally, by computing a time-discounted sum of all the upper bounds described above, we arrive at the desired result.

	\section{Solution of MFC via Natural Policy Gradient Algorithm}
	\label{sec:npg}
	
	In this section, we develop a Natural Policy Gradient (NPG) algorithm to solve the MFC problem. By virtue of Theorem \ref{theorem_1}, it provides an approximate solution to the non-uniform MARL problem. Recall from section \ref{sec:MFC} that, in MFC, it is sufficient to track only one representative agent. At time $t$, that agent takes its decision $u_t$ based on its own state $x_t$, and the mean-field state distribution $\boldsymbol{\mu}_t$. Thus, MFC essentially reduces to a single-agent Markov Decision Problem (MDP) with extended state space $\mathcal{X} \times \mathcal{P}(\mathcal{X})$ and action space $\mathcal{U}$. To solve MFC, it is therefore sufficient to consider only stationary policies \citep{puterman2014markov}. 
	
	Let the set of stationary policies be denoted by $\Pi$ and its elements be parameterized by $\Phi\in \mathbb{R}^{\mathrm{d}}$. For a given policy $\pi_{\Phi}\in \Pi$, we shall define its sequence as $\boldsymbol{\pi}_{\Phi}\triangleq \{\pi_{\Phi}, \pi_{\Phi}, \cdots\}$. Let, $Q_{\Phi}$ be the Q-function associated with policy $\pi_{\Phi}$. We define $Q_{\Phi}(x,\boldsymbol{\mu}, u)$ for arbitrary $x\in\mathcal{X}$, $\boldsymbol{\mu}\in\mathcal{P}(\mathcal{X})$, and $u\in \mathcal{U}$, as follows.
	\begin{align}
		\begin{split}
			&Q_{\Phi}(x, \boldsymbol{\mu}, u) \triangleq\\
			& \mathbb{E}\left[\sum_{t=0}^{\infty}\gamma^t r(x_t, u_t, \boldsymbol{\mu}_t, \boldsymbol{\nu}_t)\Big|x_0=x, \boldsymbol{\mu}_0=\boldsymbol{\mu}, u_0=u\right]
		\end{split}
	\end{align}
	where the expectation is over $u_{t+1}\sim \pi_{\Phi}(x_{t+1}, \boldsymbol{\mu}_{t+1})$, and $x_{t+1}\sim P(x_{t}, u_{t}, \boldsymbol{\mu}_{t}, \boldsymbol{\nu}_{t})$, $\forall t\in \mathbb{T}$. The mean-field distributions $\{\boldsymbol{\mu}_{t+1}, \boldsymbol{\nu}_t\}_{t\in \mathbb{T}}$ are updated via deterministic update equations $(\ref{nu_MF})$, and $(\ref{mu_t_plus_1})$. We now define the advantage function as follows.
	\begin{align}
		A_{\Phi}(x, \boldsymbol{\mu}, u) \triangleq Q_{\Phi}(x, \boldsymbol{\mu}, u) - \mathbb{E} [Q_{\Phi}(x, \boldsymbol{\mu}, u)]
	\end{align}
	where the expectation is over $u\sim \pi_{\Phi}(x, \boldsymbol{\mu})$.
	
	Let, $v^*_{\mathrm{MF}}(\boldsymbol{\mu}_0)=\sup_{\Phi\in\mathbb{R}^{\mathrm{d}}}v_{\mathrm{MF}}(\boldsymbol{\mu}_0, \boldsymbol{\pi}_{\Phi})$ where $v_{\mathrm{MF}}$ is the value function of MFC problem and is defined in $(\ref{v_MF})$. Let, $\{\Phi_j\}_{j=1}^J$ be a sequence of parameters that are generated by the NPG algorithm \citep{liu2020improved, agarwal2021theory} as follows.
	\begin{equation}
		\label{npg_update}
		\Phi_{j+1} = \Phi_j + \eta \mathbf{w}_j,  \mathbf{w}_j\triangleq{\arg\min}_{\mathbf{w}\in\mathbb{R}^{\mathrm{d}}} ~L_{ \zeta_{\boldsymbol{\mu}_0}^{\Phi_j}}(\mathbf{w},\Phi_j)
	\end{equation}
	
	The term $\eta$ is defined as the learning parameter. The function $L_{ \zeta_{\boldsymbol{\mu}_0}^{\Phi_j}}$ and the distribution $\zeta_{\boldsymbol{\mu}_0}^{\Phi_j}$ are defined below. 
	\begin{align}
		\begin{split}
			&L_{ \zeta_{\boldsymbol{\mu}_0}^{\Phi'}}(\mathbf{w},\Phi)\triangleq\mathbb{E}_{(x,\boldsymbol{\mu},u)\sim \zeta_{\boldsymbol{\mu}_0}^{\Phi'}}\Big[\Big(A_{\Phi}(x,\boldsymbol{\mu},u)\\
			&
			-(1-\gamma)\mathbf{w}^{\mathrm{T}}\nabla_{\Phi}\log \pi_{\Phi}(x,\boldsymbol{\mu})(u) \Big)^2\Big],
		\end{split}\\
		\label{zeta_dist}
		\begin{split}
			&\zeta_{\boldsymbol{\mu}_0}^{\Phi'}(x,\boldsymbol{\mu},u)\triangleq \sum_{\tau=0}^{\infty}\gamma^{\tau} \mathbb{P}(x_\tau=x,\boldsymbol{\mu}_{\tau}=\boldsymbol{\mu},u_\tau=u
			|\\
			&x_0=x,\boldsymbol{\mu}_0=\boldsymbol{\mu},u_0=u,\boldsymbol{\pi}_{\Phi'})(1-\gamma)
		\end{split}
	\end{align}
	
	NPG update (\ref{npg_update}) indicates that, at each iteration, one must solve another minimization problem to obtain the gradient direction. It can be solved by applying a stochastic gradient descent (SGD) approach. In particular, the update equation, in this case, turns out to be the following: $\mathbf{w}_{j,l+1}=\mathbf{w}_{j,l}-\alpha\mathbf{h}_{j,l}$ \citep{liu2020improved}. The term $\alpha$ is the learning rate for this sub-problem. The update direction $\mathbf{h}_{j,l}$ can be defined as follows.
	\begin{align}
		\begin{split}
			\mathbf{h}_{j,l}&\triangleq \Bigg(\mathbf{w}_{j,l}^{\mathrm{T}}\nabla_{\Phi_j}\log \pi_{\Phi_j}(x,\boldsymbol{\mu})(u)\\
			&-\dfrac{1}{1-\gamma}\hat{A}_{\Phi_j}(x,\boldsymbol{\mu},u)\Bigg)
			\nabla_{\Phi_j}\log \pi_{\Phi_j}(x,\boldsymbol{\mu})(u)
		\end{split}
		\label{sub_prob_grad_update}
	\end{align}
	where $(x,\boldsymbol{\mu},u)\sim\zeta_{\boldsymbol{\mu}_0}^{\Phi_j}$, and $\hat{A}_{\Phi_j}$ is a unbiased estimator of $A_{\Phi_j}$. The process to obtain the samples and the estimator has been detailed in Algorithm \ref{algo_2} in the Appendix \ref{sampling_process}. We would like to point out that Algorithm \ref{algo_2} is based on Algorithm 3 of \citep{agarwal2021theory}. We summarize the whole NPG process in Algorithm \ref{algo_1}.
	
	\begin{algorithm}
		\caption{Natural Policy Gradient}
		\label{algo_1}
		\textbf{Input:} $\eta,\alpha$: Learning rates, $J,L$: Number of execution steps\\
		\hspace{1.3cm}$\mathbf{w}_0,\Phi_0$: Initial parameters, $\boldsymbol{\mu}_0$: Initial state distribution\\
		\textbf{Initialization:} $\Phi\gets \Phi_0$ 
		\begin{algorithmic}[1]
			\FOR{$j\in\{0,1,\cdots,J-1\}$}
			{
				\STATE $\mathbf{w}_{j,0}\gets \mathbf{w}_0$\\
				\FOR {$l\in\{0,1,\cdots,L-1\}$}
				{
					\STATE Sample $(x,\boldsymbol{\mu},u)\sim\zeta_{\boldsymbol{\mu}_0}^{\Phi_j}$ and $\hat{A}_{\Phi_j}(x,\boldsymbol{\mu},u)$ using Algorithm \ref{algo_2}\\
					\STATE Compute $\mathbf{h}_{j,l}$ using $(\ref{sub_prob_grad_update})$\\
					$\mathbf{w}_{j,l+1}\gets\mathbf{w}_{j,l}-\alpha\mathbf{h}_{j,l}$
				}
				\ENDFOR
				\STATE	$\mathbf{w}_j\gets\dfrac{1}{L}\sum_{l=1}^{L}\mathbf{w}_{j,l}$\\
				\STATE	$\Phi_{j+1}\gets \Phi_j +\eta \mathbf{w}_j$
			}
			\ENDFOR
		\end{algorithmic}
		\textbf{Output:} $\{\Phi_1,\cdots,\Phi_J\}$: Policy parameters
	\end{algorithm}
	
	The global converge of NPG is stated in Lemma \ref{lemma_0} which is a direct consequence of Theorem 4.9 of \citep{liu2020improved}. However, the following assumptions are needed to establish the Lemma. These are similar to Assumptions 2.1, 4.2, 4.4 respectively in \citep{liu2020improved}.

	\begin{assumption}
		\label{ass_6}
		$\forall \Phi\in\mathbb{R}^{\mathrm{d}}$, $\forall \boldsymbol{\mu}_0\in\mathcal{P}(\mathcal{X})$, for some $\chi >0$,  $F_{\boldsymbol{\mu}_0}(\Phi)-\chi I_{\mathrm{d}}$ is positive semi-definite  where $F_{\boldsymbol{\mu}_0}(\Phi)$ can be expressed as follows.
		\begin{align*}
			F_{\boldsymbol{\mu}_0}(\Phi)&\triangleq \mathbb{E}_{(x,\boldsymbol{\mu},u)\sim \zeta_{\boldsymbol{\mu}_0}^{\Phi}}\Big[\left\lbrace\nabla_{\Phi}\pi_{\Phi}(x,\boldsymbol{\mu})(u)\right\rbrace\\
			&\times\left\lbrace\nabla_{\Phi}\log\pi_{\Phi}(x,\boldsymbol{\mu})(u)\right\rbrace^{\mathrm{T}}\Big]
		\end{align*} 
	\end{assumption}
	
	\begin{assumption}
		\label{ass_7}
		$\forall \Phi\in\mathbb{R}^{\mathrm{d}}$, $\forall \boldsymbol{\mu}\in\mathcal{P}(\mathcal{X})$, $\forall x\in\mathcal{X}$, $\forall u\in\mathcal{U}$, 
		\begin{align*}
			\left|\nabla_{\Phi}\log\pi_{\Phi}(x,\boldsymbol{\mu})(u)\right|_1\leq G
		\end{align*}
		for some positive constant $G$.
	\end{assumption}
	
	\begin{assumption}
		\label{ass_8}
		$\forall \Phi_1,\Phi_2\in\mathbb{R}^{\mathrm{d}}$, $\forall \boldsymbol{\mu}\in\mathcal{P}(\mathcal{X})$,  $\forall x\in\mathcal{X}$, $\forall u\in\mathcal{U}$,
		\begin{align*}
			|\nabla_{\Phi_1}\log\pi_{\Phi_1}(x,\boldsymbol{\mu})(u)&-\nabla_{\Phi_2}\log\pi_{\Phi_2}(x,\boldsymbol{\mu})(u)|_1\\
			&\leq M|\Phi_1-\Phi_2|_1
		\end{align*}
		
		for some positive constant $M$.
	\end{assumption}

	\begin{assumption}
		\label{ass_9}
		$\forall \Phi\in\mathbb{R}^{\mathrm{d}}$, $\forall \boldsymbol{\mu}_0\in\mathcal{P}(\mathcal{X})$, 
		\begin{align*}
			L_{\zeta_{\boldsymbol{\mu}_0}^{\Phi^*}}(\mathbf{w}^{*}_{\Phi},\Phi)\leq \epsilon_{\mathrm{bias}}, ~~\mathbf{w}^*_{\Phi}\triangleq{\arg\min}_{\mathbf{w}\in\mathbb{R}^{\mathrm{d}}} L_{\zeta_{\boldsymbol{\mu}_0}^{\Phi}}(\mathbf{w},\Phi) 
		\end{align*}
		where $\Phi^*$ is the parameter of the optimal policy.
	\end{assumption}
	
	\begin{lemma}
		\label{lemma_0}
		Let $\{\Phi_j\}_{j=1}^J$ be the sequence of policy parameters obtained from Algorithm \ref{algo_1}. If Assumptions \ref{ass_6}$-$\ref{ass_9} hold, then the following inequality holds for some $\eta, \alpha, J,L$, 
		\begin{align*}
			v_{\mathrm{MF}}^*(\boldsymbol{\mu}_0)-\dfrac{1}{J}\sum_{j=1}^J v_{\mathrm{MF}}({\boldsymbol{\mu}_0},\pi_{\Phi_j}) \leq \dfrac{\sqrt{\epsilon_{\mathrm{bias}}}}{1-\gamma}+\epsilon,
		\end{align*}  
		for arbitrary  initial parameter $\Phi_0$ and initial state distribution $\boldsymbol{\mu}_0\in\mathcal{P}(\mathcal{X})$. The parameter $\epsilon_{\mathrm{bias}}$ is a constant. The sample complexity of Algorithm \ref{algo_1} is $\mathcal{O}(\epsilon^{-3})$.  
	\end{lemma}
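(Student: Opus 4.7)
The plan is to exhibit MFC as a single-agent Markov Decision Process (MDP) on the extended space $\mathcal{X}\times\mathcal{P}(\mathcal{X})$ and then invoke Theorem 4.9 of \citep{liu2020improved} almost verbatim. As already noted in Section \ref{sec:npg}, in MFC it suffices to track a representative agent whose augmented state is $(x_t,\boldsymbol{\mu}_t)$: the action $u_t$ is drawn from $\pi_{\Phi}(x_t,\boldsymbol{\mu}_t)$, the next individual state is drawn as $x_{t+1}\sim P(x_t,u_t,\boldsymbol{\mu}_t,\boldsymbol{\nu}_t)$, $\boldsymbol{\mu}_{t+1}$ is determined deterministically by $(\ref{mu_t_plus_1})$, and $\boldsymbol{\nu}_t$ is determined deterministically by $(\ref{nu_MF})$. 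Hence the reward $r(x_t,u_t,\boldsymbol{\mu}_t,\boldsymbol{\nu}_t)$ is a function only of the augmented state and action, and $v_{\mathrm{MF}}(\boldsymbol{\mu}_0,\boldsymbol{\pi}_\Phi)$ becomes the discounted return of a bona fide MDP. The NPG update $(\ref{npg_update})$, the compatible-function loss $L_{\zeta_{\boldsymbol{\mu}_0}^{\Phi'}}$, and the occupancy measure $\zeta_{\boldsymbol{\mu}_0}^{\Phi'}$ then coincide with the standard objects associated with this MDP.

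Next, I would verify that Assumptions \ref{ass_6}$-$\ref{ass_9} map directly onto Assumptions 2.1, 4.2 and 4.4 of \citep{liu2020improved} together with the transferred compatibility-function bias used in the proof of their Theorem 4.9. Assumption \ref{ass_6} supplies a uniformly positive-definite Fisher information matrix, Assumption \ref{ass_7} bounds the score function, Assumption \ref{ass_8} gives smoothness of $\log \pi_\Phi$ in the parameter, and Assumption \ref{ass_9} encodes the approximation bias $\epsilon_{\mathrm{bias}}$. The boundedness of the instantaneous reward from Corollary \ref{corollary_1} supplies the bounded-return hypothesis needed in the concentration step of Theorem 4.9. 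I would also check that Algorithm \ref{algo_2} (adapted from Algorithm 3 of \citep{agarwal2021theory}) produces a sample $(x,\boldsymbol{\mu},u)$ drawn exactly from $\zeta_{\boldsymbol{\mu}_0}^{\Phi_j}$ together with an unbiased estimator $\hat{A}_{\Phi_j}$ whose variance is uniformly bounded in $\Phi_j$; combined with Assumption \ref{ass_7}, this bounds the second moment of the stochastic gradient $\mathbf{h}_{j,l}$ in $(\ref{sub_prob_grad_update})$, which is the only ingredient Theorem 4.9 uses beyond the four assumptions listed above.

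With these pieces in hand, Theorem 4.9 of \citep{liu2020improved} applies to the MFC-MDP directly, and, for an appropriate choice of $\eta,\alpha,J,L$ calibrated to $\epsilon$, yields the target bound
\begin{align*}
v_{\mathrm{MF}}^*(\boldsymbol{\mu}_0)-\dfrac{1}{J}\sum_{j=1}^{J} v_{\mathrm{MF}}(\boldsymbol{\mu}_0,\pi_{\Phi_j}) \leq \dfrac{\sqrt{\epsilon_{\mathrm{bias}}}}{1-\gamma}+\epsilon,
\end{align*}
along with the $\mathcal{O}(\epsilon^{-3})$ sample complexity obtained by the same calibration used in their proof. No new convergence analysis is required beyond a careful rewriting of their bounds in our notation.

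The main obstacle, and essentially the only piece of nontrivial bookkeeping, is the sampling subroutine. Although MFC reduces to a single-agent MDP in principle, the augmented state $(x_t,\boldsymbol{\mu}_t)$ lives in a continuous, high-dimensional simplex, so I must verify carefully that Algorithm \ref{algo_2} (i) terminates in finite expected time with a geometric tail, (ii) samples $(x,\boldsymbol{\mu},u)$ exactly from $\zeta_{\boldsymbol{\mu}_0}^{\Phi_j}$ rather than from an approximation, and (iii) returns a bounded, unbiased advantage estimator satisfying the variance bound required by Theorem 4.9. Once this bookkeeping is complete, Lemma \ref{lemma_0} follows as a direct translation of the single-agent NPG convergence result to the MFC-MDP constructed in the first step.
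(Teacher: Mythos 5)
Your proposal matches the paper's own treatment: the paper views MFC as a single-agent MDP on the extended state space $\mathcal{X}\times\mathcal{P}(\mathcal{X})$ and obtains Lemma \ref{lemma_0} as a direct consequence of Theorem 4.9 of \citep{liu2020improved}, with Assumptions \ref{ass_6}--\ref{ass_9} playing the role of their Assumptions 2.1, 4.2, 4.4 and the bias condition, and with Algorithm \ref{algo_2} (adapted from Algorithm 3 of \citep{agarwal2021theory}) supplying the samples from $\zeta_{\boldsymbol{\mu}_0}^{\Phi_j}$ and the unbiased advantage estimates. Your additional bookkeeping about the sampling subroutine is consistent with, and only elaborates on, the same argument.
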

	
	The bias $\epsilon_{\mathrm{bias}}$ turns out to be small for rich neural network based policies \citep{liu2020improved}. Intuitively, it indicates the expressive power of the policy class, $\Pi$.
	
	Lemma \ref{lemma_0} establishes that Algorithm \ref{algo_1} can approximate the optimal mean-field value function with an error bound of $\epsilon$, and a sample complexity of $\mathcal{O}(\epsilon^{-3})$.  Using Theorem \ref{theorem_1}, we can now state the following result.
	\begin{theorem}
		\label{corr_1}
		Let $\boldsymbol{x}_0\triangleq\{x_0^i\}_{i\in[N]}$ be the initial states in an $N$-agent system and $\boldsymbol{\mu}_0$ their associated empirical distribution. Assume that $\{\Phi_j\}_{j=1}^J$ are the policy parameters generated from Algorithm \ref{algo_1}, and the set of policies, $\Pi$ satisfies Assumption \ref{assumption_3}. If Assumptions \ref{assumption_1}, \ref{assumption_2}, \ref{assumption_4}, \ref{ass_6} - \ref{ass_9} are satisfied, then, any $\epsilon>0$, the following inequality holds for certain choices of $\eta, \alpha,J,L$  
		\begin{align}
			\label{eq_thm4}
			\begin{split}
				&	\left|\sup_{\Phi\in\mathbb{R}^{\mathrm{d}}}v_{\mathrm{MARL}}(\boldsymbol{x}_0,\pi_{\Phi})-\dfrac{1}{J}\sum_{j=1}^J v_{\mathrm{MF}}({\boldsymbol{\mu}_0},\pi_{\Phi_j})\right|\\
				&\hspace{3cm}\leq \dfrac{\sqrt{\epsilon_{\mathrm{bias}}}}{1-\gamma}+C \max\{e,\epsilon\}\\
				\text{where}& ~e\triangleq \dfrac{1}{\sqrt{N}}\left[\sqrt{|\mathcal{X}|}+\sqrt{|\mathcal{U}|}\right]
			\end{split}
		\end{align}  
		whenever $\gamma S_P<1$ where $S_P$ is given in Theorem \ref{theorem_1}. The term, $C$ is a constant and the parameter $\epsilon_{\mathrm{bias}}$ is defined in Lemma \ref{lemma_0}. The sample complexity of the process is $\mathcal{O}(\epsilon^{-3})$.
	\end{theorem}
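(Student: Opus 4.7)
The plan is to prove Theorem \ref{corr_1} by sandwiching the average mean-field value $B \triangleq \frac{1}{J}\sum_{j=1}^J v_{\mathrm{MF}}(\boldsymbol{\mu}_0,\pi_{\Phi_j})$ between two quantities involving $A \triangleq \sup_{\Phi} v_{\mathrm{MARL}}(\boldsymbol{x}_0,\pi_\Phi)$, using Theorem \ref{theorem_1} (MARL $\leftrightarrow$ MFC transfer) together with Lemma \ref{lemma_0} (NPG convergence on MFC). Throughout, let $C_1$ denote the constant that collects the factors $C_R\sqrt{|\mathcal{U}|}/(1-\gamma)$ and $S_R C_P/(S_P-1)\cdot[1/(1-\gamma S_P)-1/(1-\gamma)]$ appearing in the statement of Theorem \ref{theorem_1}, so that Theorem \ref{theorem_1} yields $|v_{\mathrm{MARL}}(\boldsymbol{x}_0,\boldsymbol{\pi})-v_{\mathrm{MF}}(\boldsymbol{\mu}_0,\boldsymbol{\pi})|\leq C_1 e$ whenever the policy sequence $\boldsymbol{\pi}$ lies in $\Pi$.

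First I would derive the upper bound on $A-B$. Let $\Phi^\star$ attain (or $\epsilon$-attain) the supremum defining $A$. By Theorem \ref{theorem_1} applied to $\boldsymbol{\pi}_{\Phi^\star}$, one has $v_{\mathrm{MARL}}(\boldsymbol{x}_0,\pi_{\Phi^\star})\leq v_{\mathrm{MF}}(\boldsymbol{\mu}_0,\pi_{\Phi^\star})+C_1 e\leq v_{\mathrm{MF}}^\star(\boldsymbol{\mu}_0)+C_1 e$. Lemma \ref{lemma_0} then gives $v_{\mathrm{MF}}^\star(\boldsymbol{\mu}_0)\leq B+\tfrac{\sqrt{\epsilon_{\mathrm{bias}}}}{1-\gamma}+\epsilon$ for the prescribed choices of $\eta,\alpha,J,L$. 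Concatenating, $A\leq B+\tfrac{\sqrt{\epsilon_{\mathrm{bias}}}}{1-\gamma}+\epsilon+C_1 e$.

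Next I would derive the matching lower bound on $A-B$. For each index $j\in\{1,\ldots,J\}$, Theorem \ref{theorem_1} (applied to the stationary sequence $\boldsymbol{\pi}_{\Phi_j}$, which lies in $\Pi$ by hypothesis) gives $v_{\mathrm{MF}}(\boldsymbol{\mu}_0,\pi_{\Phi_j})\leq v_{\mathrm{MARL}}(\boldsymbol{x}_0,\pi_{\Phi_j})+C_1 e\leq A+C_1 e$. Averaging over $j$ yields $B\leq A+C_1 e$. Combining both directions gives $|A-B|\leq \tfrac{\sqrt{\epsilon_{\mathrm{bias}}}}{1-\gamma}+\epsilon+C_1 e$, and bounding $\epsilon+C_1 e\leq (1+C_1)\max\{e,\epsilon\}$ produces the advertised form with $C\triangleq 1+C_1$. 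The sample complexity claim is inherited verbatim from Lemma \ref{lemma_0}, since the only additional work is the two applications of Theorem \ref{theorem_1}, which require no samples.

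The plan has essentially no hard step: the only subtlety is verifying that Theorem \ref{theorem_1} is applicable both at the MARL-optimal policy and at every NPG iterate $\pi_{\Phi_j}$. For the iterates this is immediate because $\Pi$ satisfies Assumption \ref{assumption_3} by hypothesis and the stationary sequence inherits Lipschitz continuity. For the MARL supremum one either restricts the supremum to the parameterised class $\Pi$ (matching the class used by the algorithm) or, if the supremum is taken over an unrestricted class, replaces $\Phi^\star$ by an $\epsilon$-maximiser and lets $\epsilon\downarrow 0$; either reading is consistent with how Assumption \ref{assumption_3} is invoked in the statement of Theorem \ref{corr_1}. No further technical obstruction arises.
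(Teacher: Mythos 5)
Your proposal is correct and takes essentially the same route as the paper: both proofs simply combine Theorem \ref{theorem_1} (applied to policies in the parameterised class) with Lemma \ref{lemma_0} through a triangle-inequality/sandwich argument and absorb $C_1 e+\epsilon$ into $C\max\{e,\epsilon\}$. Your two-sided sandwich (using an $\epsilon$-maximiser for the upper direction and Theorem \ref{theorem_1} at each iterate $\pi_{\Phi_j}$ for the lower direction) is just a more explicit rendering of the paper's one-line claim that $\left|\sup_{\Phi}v_{\mathrm{MARL}}(\boldsymbol{x}_0,\pi_{\Phi})-v_{\mathrm{MF}}^*(\boldsymbol{\mu}_0)\right|\leq C'e$, so no substantive difference arises.
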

	\begin{proof} Note that following inequality,
		\begin{align*}
			\begin{split}
				&	\left|\sup_{\Phi\in\mathbb{R}^{\mathrm{d}}}v_{\mathrm{MARL}}(\boldsymbol{x}_0,\pi_{\Phi})-\dfrac{1}{J}\sum_{j=1}^J v_{\mathrm{MF}}({\boldsymbol{\mu}_0},\pi_{\Phi_j})\right|\\
				& \leq 	\left|\sup_{\Phi\in\mathbb{R}^{\mathrm{d}}}v_{\mathrm{MARL}}(\boldsymbol{x}_0,\pi_{\Phi}) - v_{\mathrm{MF}}^*(\boldsymbol{\mu}_0)\right| \\
				& + \left|v_{\mathrm{MF}}^*(\boldsymbol{\mu}_0)-\dfrac{1}{J}\sum_{j=1}^J v_{\mathrm{MF}}({\boldsymbol{\mu}_0},\pi_{\Phi_j})\right|\\
			\end{split}
		\end{align*}

		Using Theorem \ref{theorem_1}, the first term can be bounded by $C'e$ for some constant $C'$. The second term can be bounded by $\sqrt{\epsilon_{\mathrm{bias}}}/(1-\gamma) + \epsilon$ with a sample complexity of $\mathcal{O}(\epsilon^{-3})$ (Lemma \ref{lemma_0}). Assigning $C=2\max\{C', 1\}$, we conclude the result. 
	\end{proof}
	
	Theorem \ref{corr_1} guarantees that Algorithm \ref{algo_1} can yield a policy such that its associated value is $\mathcal{O}(\max\{e,\epsilon\})$ error away from the optimal value of the non-uniform MARL problem. Moreover, it also dictates such a policy can be obtained with a sample complexity of $\mathcal{O}(\epsilon^{-3})$.
	
	\section{Experiments}

	Let the policy sequence that maximizes the mean-field value function $v_{\mathrm{MF}}(\boldsymbol{\mu}_0, \cdot)$ be denoted as $\boldsymbol{\pi}^*_{\mathrm{MF}}$ where $\boldsymbol{\mu}_0$ indicates the empirical distribution of the initial joint state, $\boldsymbol{x}_0^N$. We define the percentage error as follows.
\begin{align}
\mathrm{error}\triangleq \left|\dfrac{v_{\mathrm{MARL}}(\boldsymbol{x}_0^N, \boldsymbol{\pi}^*_{\mathrm{MF}})-v_{\mathrm{MF}}(\boldsymbol{\mu}_0, \boldsymbol{\pi}_{\mathrm{MF}}^*)}{v_{\mathrm{MF}}(\boldsymbol{\mu}_0, \boldsymbol{\pi}_{\mathrm{MF}}^*)}\right|\times 100\%
\label{def_error}
\end{align}

			\begin{figure}
	\centering
	\includegraphics[width=\linewidth]{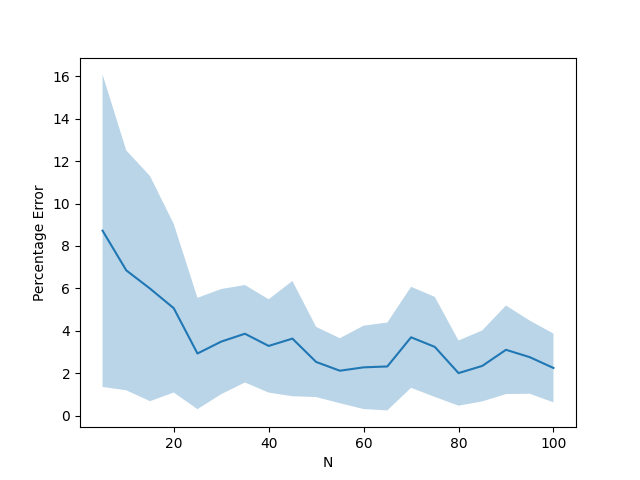}
	\caption{Percentage $\mathrm{error}$ (defined by $(\ref{def_error})$) as a function of $N$. Reward, state transition, and agent interaction matrix are same as stated in Example \ref{example_1}. The bold line and the half-width of the shaded region respectively denote the mean, and the standard deviation values of the $\mathrm{error}$ obtained over $25$ random seeds. The values of various system parameters used in the experiment are as follows: $K=5$, $\alpha_R=1$, $\beta_R=$ $\lambda_R=0.5$, and $Q=10$. The hyperparameter values used in  Algorithm \ref{algo_1} are as follows: $\alpha = \eta = 10^{-3}$, $J=L=10^2$.  We use a feed forward neural network with a single hidden layer as the policy approximator.}
	\label{fig_1}
\end{figure}

\begin{figure*}
	\centering
	\begin{subfigure}{0.45\textwidth}
		\centering
		\includegraphics[width=\linewidth]{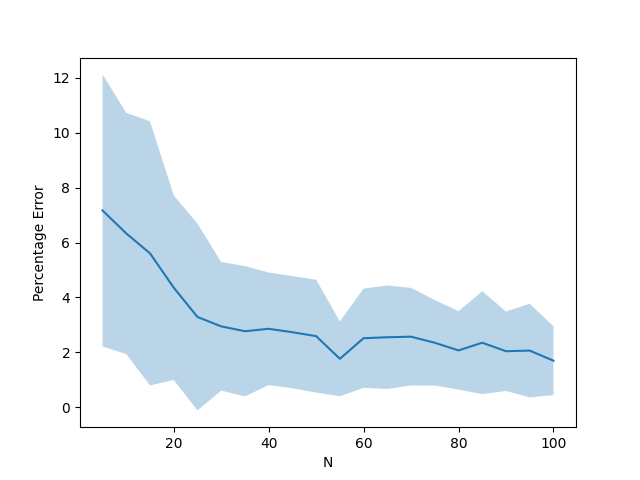}
		\caption{$\sigma=1.1$}
		\label{fig_2a}
	\end{subfigure}
	\begin{subfigure}{0.45\textwidth}
		\centering
		\includegraphics[width=\linewidth]{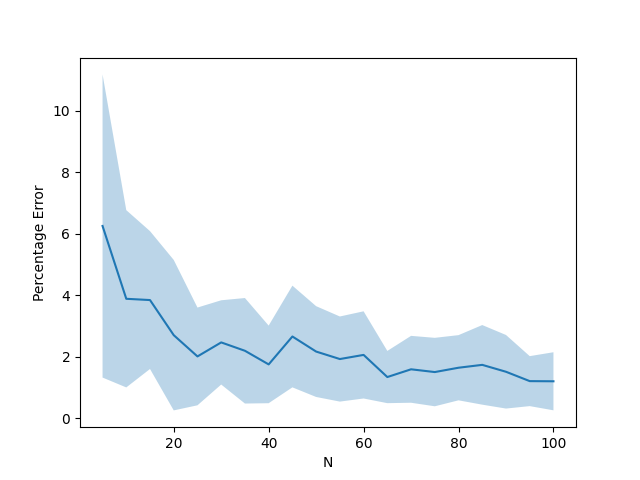}
		\caption{$\sigma=1.2$}
		\label{fig_2b}
	\end{subfigure}
	\caption{Percentage $\mathrm{error}$ when the reward function is given by $(\ref{new_reward})$. All other parameters are same as in Fig. \ref{fig_1}.}
	\label{fig_2}
\end{figure*}

We can approximately obtain $\boldsymbol{\pi}^*_{\mathrm{MF}}$ using Algorithm \ref{algo_1}. Fig. \ref{fig_1} plots the value of $\mathrm{error}$ (defined in $(\ref{def_error})$) as a function of $N$ for the reward, transition function, and  interaction model described in Example \ref{example_1}. The values of various parameters used in this numerical experiment are provided in the description of Fig. \ref{fig_1}. Evidently, the $\mathrm{error}$ decreases with $N$. Notice that the reward function stated in Example \ref{example_1} (thereby that is used for generating Fig. \ref{fig_1}) is linear in its mean-field distribution argument. In Fig. \ref{fig_2}, we exhibit the $\mathrm{error}$ as a function of $N$ with the following non-linear reward function.
\begin{align}
r(x_t^i, u_t^i,\boldsymbol{\mu}_t^{i,N}, \boldsymbol{\nu}_t^{i,N}) = \alpha_R x_t^i - \beta_R(\bar{\boldsymbol{\mu}}_t^{i,N})^\sigma -  \lambda_R u_t^i
\label{new_reward}
\end{align}

The term $\sigma$ is a measure of non-linearity. All other parameters are same as stated in Example \ref{example_1}. Observe that if $\sigma=1$, the reward function stated above turns out to be identical to the reward function given in Example \ref{example_1}. In Fig. \ref{fig_2a}, and \ref{fig_2b} we plot $\mathrm{error}$ for $\sigma=1.1, 1.2$ respectively. In both of these scenarios, we see the $\mathrm{error}$ to be a decreasing function of $N$. This indicates that although our MFC-based approximation results are theoretically proven for affine rewards only, they  empirically hold for non-affine rewards as well.

The codes for generating these results are publicly available at: https://github.com/washim-uddin-mondal/UAI2022

	\section{Conclusion}
	
	In this article, we consider a multi-agent reinforcement learning (MARL) problem where the interaction between agents is described by a doubly stochastic matrix. We prove that, if the reward function is affine, one can well-approximate this non-uniform MARL problem via an associated Mean-Field Control (MFC) problem. We obtain an upper bound of the approximation error as a function of the number of agents, and also propose a natural policy gradient (NPG) algorithm to solve the MFC problem with polynomial sample complexity. The obvious drawback of our approach is the restriction on the structure of the reward function. Therefore, extension of our techniques to non-affine reward functions is an important future goal. 
	
	
	\begin{acknowledgements} 
	W. U. M., and S. V. U. were partially funded by NSF Grant No. 1638311 CRISP Type 2/Collaborative Research: Critical Transitions in the Resilience and Recovery of Interdependent Social and Physical Networks.
	\end{acknowledgements}
	
	\bibliography{mondal_582}
	
	\newpage
	\clearpage
	
	\appendix
	\section{Proof of Corollary \ref{corollary_1}}
	The following inequalities hold $\forall x\in \mathcal{X}$, $\forall u\in \mathcal{U}$, $\forall \boldsymbol{\mu}_1\in\mathcal{P}(\mathcal{X})$, and $\forall \boldsymbol{\nu}_1\in\mathcal{P}(\mathcal{U})$.
	\begin{align*}
		|r(x, u, \boldsymbol{\mu}_1, \boldsymbol{\nu}_1)|&\leq |\boldsymbol{a}^T\boldsymbol{\mu}_1| + |\boldsymbol{b}^T\boldsymbol{\nu}_1| + |f(x, u)|\\
		&\leq |\boldsymbol{a}|_1|\boldsymbol{\mu}_1|_1 + |\boldsymbol{b}|_1|\boldsymbol{\nu}_1|_1 + |f(x, u)|\\
		& \overset{(a)}{=} |\boldsymbol{a}|_1 + |\boldsymbol{b}|_1 + |f(x, u)|
	\end{align*}
	
	Equality (a) follows from the fact that both $\boldsymbol{\mu}_1$ and $\boldsymbol{\nu}_1$ are probability distributions. As the sets $\mathcal{X}$, $\mathcal{U}$ are finite, there must exist $M_F>0$ such that, $|f(x, u)|\leq M_F$, $\forall x\in \mathcal{X}$, $\forall u\in \mathcal{U}$. Taking $M_R = |\boldsymbol{a}|_1 + |\boldsymbol{b}|_1 + M_F$, we can establish proposition (a).
	
	Proposition (b) follows from the fact that $\forall x\in \mathcal{X}$, $\forall u\in \mathcal{U}$, $\forall \boldsymbol{\mu}_1, \boldsymbol{\mu}_2\in\mathcal{P}(\mathcal{X})$, $\forall \boldsymbol{\nu}_1, \boldsymbol{\nu}_2\in\mathcal{P}(\mathcal{U})$, the following relations hold.
	\begin{align*}
		|r(x, u,& \boldsymbol{\mu}_1, \boldsymbol{\nu}_2) - r(x, u, \boldsymbol{\mu}_2, \boldsymbol{\nu}_2)|\\
		&\leq  |\boldsymbol{a}^T(\boldsymbol{\mu}_1-\boldsymbol{\mu}_2)| + |\boldsymbol{b}^T(\boldsymbol{\nu}_1-\boldsymbol{\nu}_2)|\\
		&\leq |\boldsymbol{a}|_1 |\boldsymbol{\mu}_1-\boldsymbol{\mu}_2|_1 + |\boldsymbol{b}|_1 |\boldsymbol{\nu}_1 - \boldsymbol{\nu}_2|_1
	\end{align*}
	
	Taking $L_R = \max\{|\boldsymbol{a}|_1, |\boldsymbol{b}|_1\}$, we conclude the result.
	
	\section{Proof of Theorem \ref{theorem_1}}
	
	The following results are necessary to establish the theorem. 
	
	\subsection{Lipschitz Continuity}
	In the following three lemmas, we shall establish that the functions, $\nu^{\mathrm{MF}}$, $P^{\mathrm{MF}}$ and $r^{\mathrm{MF}}$ defined in $(\ref{nu_MF}), (\ref{mu_t_plus_1})$ and $(\ref{v_MF})$ are Lipschitz continuous. In all of these lemmas, the term $\Pi$ denotes the set of policies that satisfies Assumption \ref{assumption_3}. The proofs of these lemmas are delegated to Appendix \ref{proof_lemma_1}, \ref{proof_lemma_2}, and \ref{proof_lemma_3} respectively.
	
	\begin{lemma}
		If $\nu^{\mathrm{MF}}(.,.)$ is defined by $(\ref{nu_MF})$, then $\forall \boldsymbol{\mu}_1,\boldsymbol{\mu}_2\in \mathcal{P}(\mathcal{X})$, $\forall \pi\in \Pi$, the following inequality holds.
		\begin{align*}
			|\nu^{\mathrm{MF}}(\boldsymbol{\mu}_1, \pi) - \nu^{\mathrm{MF}}(\boldsymbol{\mu}_2, \pi)|_1 \leq (1+L_Q)|\boldsymbol{\mu}_1-\boldsymbol{\mu}_2|_1
		\end{align*}
		where $L_Q$ is defined in Assumption \ref{assumption_3}.
		\label{lemma_1}
	\end{lemma}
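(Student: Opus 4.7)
The plan is to control the $L_1$ difference between the two mean-field action distributions by a standard add-and-subtract decomposition that separates the dependence of $\nu^{\mathrm{MF}}$ on the state distribution from its dependence on the policy (which itself depends on the state distribution by Assumption \ref{assumption_3}).

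Concretely, I would first expand the $L_1$ norm using the definition \eqref{nu_MF}: write
\begin{align*}
\nu^{\mathrm{MF}}(\boldsymbol{\mu}_1,\pi)(u)-\nu^{\mathrm{MF}}(\boldsymbol{\mu}_2,\pi)(u)
&=\sum_{x\in\mathcal{X}}\bigl[\pi(x,\boldsymbol{\mu}_1)(u)-\pi(x,\boldsymbol{\mu}_2)(u)\bigr]\boldsymbol{\mu}_1(x)\\
&\quad+\sum_{x\in\mathcal{X}}\pi(x,\boldsymbol{\mu}_2)(u)\bigl[\boldsymbol{\mu}_1(x)-\boldsymbol{\mu}_2(x)\bigr],
\end{align*}
obtained by inserting and subtracting the mixed term $\sum_{x}\pi(x,\boldsymbol{\mu}_2)(u)\boldsymbol{\mu}_1(x)$. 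Summing $|{\,\cdot\,}|$ over $u\in\mathcal{U}$ and applying the triangle inequality will split the quantity of interest into two pieces, which I will then bound separately.

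For the first piece, after swapping the order of summation, the inner sum over $u$ is exactly $|\pi(x,\boldsymbol{\mu}_1)-\pi(x,\boldsymbol{\mu}_2)|_1$, which by Assumption \ref{assumption_3} is at most $L_Q|\boldsymbol{\mu}_1-\boldsymbol{\mu}_2|_1$; summing the outer $\boldsymbol{\mu}_1(x)$ factor then yields a clean bound of $L_Q|\boldsymbol{\mu}_1-\boldsymbol{\mu}_2|_1$ because $\boldsymbol{\mu}_1$ is a probability distribution. For the second piece, the inner sum $\sum_{u}\pi(x,\boldsymbol{\mu}_2)(u)$ equals $1$ since $\pi(x,\boldsymbol{\mu}_2)\in\mathcal{P}(\mathcal{U})$, leaving exactly $\sum_{x}|\boldsymbol{\mu}_1(x)-\boldsymbol{\mu}_2(x)|=|\boldsymbol{\mu}_1-\boldsymbol{\mu}_2|_1$. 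Adding the two bounds gives the claimed constant $1+L_Q$.

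There is no genuine technical obstacle here; the only care needed is bookkeeping between the two meanings of the $L_1$ norm (one over $\mathcal{X}$ for the state distributions, one over $\mathcal{U}$ for the policy output) and a legitimate application of Fubini to exchange the order of the finite sums over $x$ and $u$. The proof relies on two very mild facts, namely that mean-field action and state measures are probability distributions (so their $\ell_1$ masses equal one), together with the Lipschitz hypothesis on $\pi$; consequently the estimate is essentially tight and the constant $1+L_Q$ is the expected one.
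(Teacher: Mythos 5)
Your proposal is correct and follows essentially the same route as the paper's proof: the same add-and-subtract of the mixed term $\sum_{x}\pi(x,\boldsymbol{\mu}_2)(u)\boldsymbol{\mu}_1(x)$, with the first piece controlled by the Lipschitz property of $\pi$ from Assumption \ref{assumption_3} and the second by the fact that $\pi(x,\boldsymbol{\mu}_2)$ is a probability distribution, yielding the constant $1+L_Q$. No gaps; the bookkeeping you describe is exactly what the paper does.
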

	
	\begin{lemma}
		If $P^{\mathrm{MF}}(.,.)$ is defined by $(\ref{mu_t_plus_1})$, then $\forall \boldsymbol{\mu}_1,\boldsymbol{\mu}_2\in \mathcal{P}(\mathcal{X})$, $\forall \pi\in \Pi$, the following inequality holds.
		\begin{align*}
			\begin{split}
				&|P^{\mathrm{MF}}(\boldsymbol{\mu}_1, \pi) - P^{\mathrm{MF}}(\boldsymbol{\mu}_2, \pi)|_1 \leq S_P|\boldsymbol{\mu}_1-\boldsymbol{\mu}_2|_1\\
				\text{where}~&S_P\triangleq (1+L_Q) + L_P(2+L_Q).
			\end{split}
		\end{align*}
		The terms $L_P$, and $L_Q$ are defined in Assumption \ref{assumption_1}, and \ref{assumption_3} respectively.
		\label{lemma_2}
	\end{lemma}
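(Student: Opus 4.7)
The plan is a three-term telescoping decomposition that isolates the three distinct ways $\boldsymbol{\mu}$ enters the definition (\ref{mu_t_plus_1}) of $P^{\mathrm{MF}}$: through the marginal factor $\boldsymbol{\mu}(x)$, through the policy $\pi(x,\boldsymbol{\mu})(u)$, and through the transition kernel $P(x,u,\boldsymbol{\mu},\nu^{\mathrm{MF}}(\boldsymbol{\mu},\pi))$. Writing the difference $P^{\mathrm{MF}}(\boldsymbol{\mu}_1,\pi)(x') - P^{\mathrm{MF}}(\boldsymbol{\mu}_2,\pi)(x')$ coordinate-wise at each $x'\in\mathcal{X}$, I would insert two hybrid summands in which the $\boldsymbol{\mu}$-arguments are swapped one at a time, apply the triangle inequality, and interchange the sum over $x'$ with the sums over $x,u$. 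This splits $|P^{\mathrm{MF}}(\boldsymbol{\mu}_1,\pi) - P^{\mathrm{MF}}(\boldsymbol{\mu}_2,\pi)|_1$ into three pieces $T_1+T_2+T_3$.

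For $T_1$ only the difference $|\boldsymbol{\mu}_1(x)-\boldsymbol{\mu}_2(x)|$ remains; since $\sum_{x'} P(x,u,\boldsymbol{\mu}_1,\boldsymbol{\nu}_1)(x') = 1$ and $\sum_u \pi(x,\boldsymbol{\mu}_1)(u) = 1$, this collapses to $|\boldsymbol{\mu}_1-\boldsymbol{\mu}_2|_1$. For $T_2$ only $|\pi(x,\boldsymbol{\mu}_1)(u)-\pi(x,\boldsymbol{\mu}_2)(u)|$ remains; summing out $x'$ by the stochasticity of $P$, applying Assumption \ref{assumption_3} to sum over $u$, and then pairing against $\boldsymbol{\mu}_2(x)$ yields $L_Q|\boldsymbol{\mu}_1-\boldsymbol{\mu}_2|_1$. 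For $T_3$, Assumption \ref{assumption_1} bounds the sum over $x'$ of $|P(x,u,\boldsymbol{\mu}_1,\boldsymbol{\nu}_1)(x')-P(x,u,\boldsymbol{\mu}_2,\boldsymbol{\nu}_2)(x')|$ by $L_P(|\boldsymbol{\mu}_1-\boldsymbol{\mu}_2|_1 + |\boldsymbol{\nu}_1-\boldsymbol{\nu}_2|_1)$ where $\boldsymbol{\nu}_i \triangleq \nu^{\mathrm{MF}}(\boldsymbol{\mu}_i,\pi)$, and Lemma \ref{lemma_1} then upgrades $|\boldsymbol{\nu}_1-\boldsymbol{\nu}_2|_1$ to $(1+L_Q)|\boldsymbol{\mu}_1-\boldsymbol{\mu}_2|_1$, giving $T_3 \le L_P(2+L_Q)|\boldsymbol{\mu}_1-\boldsymbol{\mu}_2|_1$. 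Summing the three pieces produces exactly $S_P = (1+L_Q) + L_P(2+L_Q)$.

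I do not anticipate any serious obstacle; this is a routine telescoping once Lemma \ref{lemma_1} is in hand. The only point requiring care is keeping straight the $L_1$-norm taken over the output index $x'$ against the (non-norming) sums over the input variables $x$ and $u$, so that the stochasticity identities $\sum_{x'} P(\cdots)(x') = 1$ and $\sum_u \pi(\cdots)(u) = 1$ are invoked in the right places and the bound avoids spurious factors of $|\mathcal{X}|$ or $|\mathcal{U}|$.
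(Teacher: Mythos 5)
Your proposal is correct and follows essentially the same route as the paper: the paper first splits the difference into a kernel-variation term (your $T_3$, bounded via Assumption \ref{assumption_1} and Lemma \ref{lemma_1} by $L_P(2+L_Q)|\boldsymbol{\mu}_1-\boldsymbol{\mu}_2|_1$) and a weight-variation term, which it then splits again into exactly your $T_2$ and $T_1$ (yielding $L_Q|\boldsymbol{\mu}_1-\boldsymbol{\mu}_2|_1$ and $|\boldsymbol{\mu}_1-\boldsymbol{\mu}_2|_1$ via Assumption \ref{assumption_3} and normalization). The only difference is bookkeeping—three hybrids at once versus a two-step split—so the bounds and the constant $S_P$ come out identically.
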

	
	\begin{lemma}
		If $r^{\mathrm{MF}}(.,.)$ is defined by $(\ref{v_MF})$, then $\forall \boldsymbol{\mu}_1,\boldsymbol{\mu}_2\in \mathcal{P}(\mathcal{X})$, $\forall \pi\in \Pi$, the following inequality holds.
		\begin{align*}
			\begin{split}
				&|r^{\mathrm{MF}}(\boldsymbol{\mu}_1, \pi) - r^{\mathrm{MF}}(\boldsymbol{\mu}_2, \pi)|_1 \leq S_R|\boldsymbol{\mu}_1-\boldsymbol{\mu}_2|_1\\
				\text{where}~&S_R\triangleq M_R(1+L_Q) + L_R(2+L_Q).
			\end{split}
		\end{align*}
		The terms $M_R, L_R$, and $L_Q$ are defined in Corollary \ref{corollary_1} and Assumption \ref{assumption_3} respectively.
		\label{lemma_3}
	\end{lemma}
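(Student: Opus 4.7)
The plan is to bound $|r^{\mathrm{MF}}(\boldsymbol{\mu}_1, \pi) - r^{\mathrm{MF}}(\boldsymbol{\mu}_2, \pi)|$ by decomposing the difference via an add-and-subtract argument that isolates the two sources of $\boldsymbol{\mu}$-dependence in the definition $(\ref{v_MF})$: the arguments of $r$ itself, and the ``sampling weight'' $\pi(x,\boldsymbol{\mu})(u)\boldsymbol{\mu}(x)$ that multiplies $r$. Concretely, I introduce the shorthand $A(\boldsymbol{\mu},\pi)(x,u) \triangleq \pi(x,\boldsymbol{\mu})(u)\boldsymbol{\mu}(x)$ so that $r^{\mathrm{MF}}(\boldsymbol{\mu},\pi) = \sum_{x,u} r(x,u,\boldsymbol{\mu},\nu^{\mathrm{MF}}(\boldsymbol{\mu},\pi))\, A(\boldsymbol{\mu},\pi)(x,u)$, and then add and subtract the hybrid term $\sum_{x,u} r(x,u,\boldsymbol{\mu}_2,\nu^{\mathrm{MF}}(\boldsymbol{\mu}_2,\pi))\, A(\boldsymbol{\mu}_1,\pi)(x,u)$. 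This produces two pieces $T_1$ and $T_2$ which respectively record the change in the reward-function factor and in the weight factor.

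For $T_1$, I apply Corollary \ref{corollary_1}(b) pointwise in $(x,u)$ to bound the $r$-difference by $L_R[\,|\boldsymbol{\mu}_1-\boldsymbol{\mu}_2|_1 + |\nu^{\mathrm{MF}}(\boldsymbol{\mu}_1,\pi)-\nu^{\mathrm{MF}}(\boldsymbol{\mu}_2,\pi)|_1\,]$. Invoking Lemma \ref{lemma_1} to further bound the $\nu^{\mathrm{MF}}$-difference by $(1+L_Q)|\boldsymbol{\mu}_1-\boldsymbol{\mu}_2|_1$, and using $\sum_{x,u} A(\boldsymbol{\mu}_1,\pi)(x,u)=1$, this piece contributes at most $L_R(2+L_Q)|\boldsymbol{\mu}_1-\boldsymbol{\mu}_2|_1$, which accounts for the second summand in $S_R$.

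For $T_2$, I use Corollary \ref{corollary_1}(a) to pull $|r|$ out uniformly as $M_R$, reducing the bound to $M_R \sum_{x,u} |A(\boldsymbol{\mu}_1,\pi)(x,u) - A(\boldsymbol{\mu}_2,\pi)(x,u)|$. The key sub-step is a nested add-and-subtract inside the weight: writing $\pi(x,\boldsymbol{\mu}_1)(u)\boldsymbol{\mu}_1(x) - \pi(x,\boldsymbol{\mu}_2)(u)\boldsymbol{\mu}_2(x) = \pi(x,\boldsymbol{\mu}_1)(u)\,(\boldsymbol{\mu}_1(x)-\boldsymbol{\mu}_2(x)) + (\pi(x,\boldsymbol{\mu}_1)(u)-\pi(x,\boldsymbol{\mu}_2)(u))\,\boldsymbol{\mu}_2(x)$. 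Summing the first term over $u$ and using that $\pi(x,\boldsymbol{\mu}_1)$ is a probability measure gives $|\boldsymbol{\mu}_1(x)-\boldsymbol{\mu}_2(x)|$, which then sums over $x$ to $|\boldsymbol{\mu}_1-\boldsymbol{\mu}_2|_1$. Summing the second term over $u$ and invoking Assumption \ref{assumption_3} gives $L_Q|\boldsymbol{\mu}_1-\boldsymbol{\mu}_2|_1 \boldsymbol{\mu}_2(x)$, which sums over $x$ to $L_Q|\boldsymbol{\mu}_1-\boldsymbol{\mu}_2|_1$. Altogether, $T_2 \leq M_R(1+L_Q)|\boldsymbol{\mu}_1-\boldsymbol{\mu}_2|_1$, producing the first summand in $S_R$.

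Combining $T_1$ and $T_2$ yields the constant $S_R = M_R(1+L_Q) + L_R(2+L_Q)$ exactly as stated. The only real bookkeeping obstacle is the nested double add-and-subtract (once at the level of $r^{\mathrm{MF}}$ to split the $r$ and the weight contributions, and once inside the joint weight $\pi(x,\boldsymbol{\mu})(u)\boldsymbol{\mu}(x)$ to split the $\pi$ and $\boldsymbol{\mu}$ contributions); all analytic inputs are supplied by Corollary \ref{corollary_1}, Lemma \ref{lemma_1}, and Assumption \ref{assumption_3}, with no additional smoothness or structural facts required.
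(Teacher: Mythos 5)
Your proof is correct and follows essentially the same route as the paper's: the same add-and-subtract of the hybrid term $\sum_{x,u} r(x,u,\boldsymbol{\mu}_2,\nu^{\mathrm{MF}}(\boldsymbol{\mu}_2,\pi))\pi(x,\boldsymbol{\mu}_1)(u)\boldsymbol{\mu}_1(x)$, with the reward piece handled by Corollary \ref{corollary_1}(b) plus Lemma \ref{lemma_1} and the weight piece by Corollary \ref{corollary_1}(a), Assumption \ref{assumption_3}, and the normalization of the distributions. The only (immaterial) difference is which of $\boldsymbol{\mu}_1,\boldsymbol{\mu}_2$ carries the fixed factor in the inner split of the weight difference, so the constants come out identically.
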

	
	\subsection{Approximation Results}
	\label{appndx:approx}
	
	The following Lemma \ref{lemma_5}, \ref{lemma_6}, \ref{lemma_7} establish that the state, action distributions and the average reward of an $N$-agent system closely approximate their mean-field counterparts when $N$ is large. All of these results use Lemma \ref{lemma_4} as the key ingredient.
	
	\begin{lemma}
		\citep{mondal2021approximation} Assume that $\forall m\in [M]$, $\{X_{m,n}\}_{n\in[N]}$ are independent random variables that lie in the interval $[0, 1]$, and satisfy the following constraint: $\sum_{m\in[M]}\mathbb{E}[X_{m,n}]=1$, $\forall n\in [N]$. If $\{C_{m,n}\}_{m\in[M], n\in [N]}$ are constants that obey $|C_{m,n}|\leq C$, $\forall m\in [M]$, $\forall n\in [N]$, then the following inequality holds.
		\begin{align*}
			\sum_{m\in[M]} \mathbb{E} \Big| C_{m,n}(X_{m,n}-E[X_{m,n}])\Big|\leq C\sqrt{MN}
		\end{align*}
		\label{lemma_4}
	\end{lemma}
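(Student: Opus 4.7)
The plan is to prove the stated concentration bound by the standard route of variance bound followed by Jensen's and Cauchy–Schwarz inequalities, exploiting the $[0,1]$-valued hypothesis to dominate second moments by first moments. I read the inequality in the natural way consistent with its right-hand side $C\sqrt{MN}$, namely as bounding $\sum_{m \in [M]} \mathbb{E}\bigl| \sum_{n \in [N]} C_{m,n}(X_{m,n}-\mathbb{E}[X_{m,n}]) \bigr|$, with independence holding across $n$ for each fixed $m$.

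First, for each $m \in [M]$, set $S_m \triangleq \sum_{n\in[N]} C_{m,n}(X_{m,n}-\mathbb{E}[X_{m,n}])$. Since the $X_{m,n}$ are independent across $n$, the random variables $C_{m,n}(X_{m,n}-\mathbb{E}[X_{m,n}])$ are independent centered summands, so $\mathrm{Var}(S_m) = \sum_{n\in[N]} C_{m,n}^2 \mathrm{Var}(X_{m,n})$. The critical structural step comes next: because $X_{m,n}\in[0,1]$ almost surely, we have $\mathrm{Var}(X_{m,n}) \le \mathbb{E}[X_{m,n}^2] \le \mathbb{E}[X_{m,n}]$, and combining this with $|C_{m,n}|\le C$ gives
\begin{align*}
\mathrm{Var}(S_m) \;\le\; C^2 \sum_{n\in[N]} \mathbb{E}[X_{m,n}].
\end{align*}

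Second, by Jensen's inequality applied to the concave square-root function, $\mathbb{E}|S_m| \le \sqrt{\mathbb{E}[S_m^2]} = \sqrt{\mathrm{Var}(S_m)}$, which yields
\begin{align*}
\sum_{m\in[M]} \mathbb{E}|S_m| \;\le\; C \sum_{m\in[M]} \sqrt{\sum_{n\in[N]} \mathbb{E}[X_{m,n}]}.
\end{align*}
Applying Cauchy--Schwarz over the index $m$ to the outer sum gives
\begin{align*}
\sum_{m\in[M]} \sqrt{\sum_{n\in[N]} \mathbb{E}[X_{m,n}]} \;\le\; \sqrt{M}\, \sqrt{\sum_{m\in[M]}\sum_{n\in[N]} \mathbb{E}[X_{m,n}]}.
\end{align*}

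Finally, swap the order of summation and invoke the hypothesis $\sum_{m\in[M]} \mathbb{E}[X_{m,n}]=1$ for every $n$, so that $\sum_{m,n}\mathbb{E}[X_{m,n}] = N$. Chaining the three displayed bounds produces $\sum_{m} \mathbb{E}|S_m| \le C\sqrt{M}\sqrt{N}=C\sqrt{MN}$, as required. The only non-routine move is recognizing that the $[0,1]$-boundedness plus the column-sum-to-one constraint is precisely what collapses $M$ independent variance sums of naive size $MN$ down to size $N$; without that domination $\mathrm{Var}(X_{m,n})\le\mathbb{E}[X_{m,n}]$ one would only obtain an $\mathcal{O}(M\sqrt{N})$ bound, so I expect this is the step a careful reader (or referee) will want to see spelled out.
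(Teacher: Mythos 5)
Your proof is correct, and your reading of the statement (restoring the inner sum over $n$ inside the absolute value, with independence across $n$ for each fixed $m$) is the intended one, as the applications in Lemmas \ref{lemma_5}--\ref{lemma_7} confirm. The paper does not reprove this lemma but imports it from \citep{mondal2021approximation}, and your argument --- variance of $S_m$ bounded via $\mathrm{Var}(X_{m,n})\le\mathbb{E}[X_{m,n}]$ on $[0,1]$, then Jensen, then Cauchy--Schwarz over $m$ combined with the constraint $\sum_m\mathbb{E}[X_{m,n}]=1$ --- is precisely the standard derivation used there, so nothing further is needed.
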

	
	The proofs of Lemma \ref{lemma_5}, \ref{lemma_6}, and \ref{lemma_7} have been delegated to Appendix \ref{proof_lemma_5}, \ref{proof_lemma_6}, and \ref{proof_lemma_7} respectively.
	
	\begin{lemma}
		Assume $\{\boldsymbol{\mu}_t^N, \boldsymbol{\nu}_t^N\}_{t\in \mathbb{T}}$ are empirical state and action distributions of an $N$-agent system defined by (\ref{mu}), and (\ref{nu}) respectively. If these distributions are generated by a sequence of policies $\boldsymbol{\pi} = \{\pi_t\}_{t\in \mathbb{T}}$, then $\forall t\in \mathbb{T}$ the following inequality holds. 
		\begin{align*}
			\mathbb{E}|\boldsymbol{\nu}_t^N-\nu^{\mathrm{MF}}(\boldsymbol{\mu}_t^N,\pi_t)|_1 \leq \dfrac{\sqrt{|\mathcal{U}|}}{\sqrt{N}}
		\end{align*} 
		where $\nu^{\mathrm{MF}}$ is defined in $(\ref{nu_MF})$.
		\label{lemma_5}
	\end{lemma}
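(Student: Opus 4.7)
The plan is to recognize that both $\boldsymbol{\nu}_t^N$ and $\nu^{\mathrm{MF}}(\boldsymbol{\mu}_t^N,\pi_t)$ admit a common ``per-agent'' representation, so the difference becomes a centered sum that I can control with Lemma \ref{lemma_4}. Specifically, since $\boldsymbol{\mu}_t^N(x) = \tfrac{1}{N}\sum_i \delta(x_t^i=x)$, definition (\ref{nu_MF}) gives $\nu^{\mathrm{MF}}(\boldsymbol{\mu}_t^N,\pi_t)(u) = \tfrac{1}{N}\sum_{i=1}^N \pi_t(x_t^i,\boldsymbol{\mu}_t^N)(u)$. Combining with (\ref{nu}), I get for every $u\in\mathcal{U}$,
\begin{align*}
\boldsymbol{\nu}_t^N(u)-\nu^{\mathrm{MF}}(\boldsymbol{\mu}_t^N,\pi_t)(u) = \frac{1}{N}\sum_{i=1}^N\bigl[\delta(u_t^i=u)-\pi_t(x_t^i,\boldsymbol{\mu}_t^N)(u)\bigr].
\end{align*}

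Next I condition on the collection $\mathcal{F}_t\triangleq\{x_t^i\}_{i\in[N]}$. By Remark 1, the random variables $\{u_t^i\}_{i\in[N]}$ are conditionally independent given $\mathcal{F}_t$, each drawn from $\pi_t(x_t^i,\boldsymbol{\mu}_t^N)$, so $\mathbb{E}[\delta(u_t^i=u)\mid\mathcal{F}_t]=\pi_t(x_t^i,\boldsymbol{\mu}_t^N)(u)$. Set $X_{u,i}\triangleq \delta(u_t^i=u)\in[0,1]$ and $C_{u,i}\triangleq 1/N$, so $|C_{u,i}|\leq C\triangleq 1/N$. For each fixed $i$, $\sum_{u\in\mathcal{U}}\mathbb{E}[X_{u,i}\mid\mathcal{F}_t]=\sum_u \pi_t(x_t^i,\boldsymbol{\mu}_t^N)(u)=1$, and for each fixed $u$ the family $\{X_{u,i}\}_{i\in[N]}$ is conditionally independent. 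These are exactly the hypotheses of Lemma \ref{lemma_4} (applied conditionally on $\mathcal{F}_t$, with $M=|\mathcal{U}|$ and population size $N$).

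Applying Lemma \ref{lemma_4} conditionally and then taking the outer expectation gives
\begin{align*}
\mathbb{E}\bigl|\boldsymbol{\nu}_t^N-\nu^{\mathrm{MF}}(\boldsymbol{\mu}_t^N,\pi_t)\bigr|_1 = \sum_{u\in\mathcal{U}}\mathbb{E}\!\left[\,\mathbb{E}\!\left[\left|\sum_{i=1}^N C_{u,i}(X_{u,i}-\mathbb{E}[X_{u,i}\mid\mathcal{F}_t])\right|\,\Big|\,\mathcal{F}_t\right]\right] \leq \frac{1}{N}\sqrt{|\mathcal{U}|N} = \frac{\sqrt{|\mathcal{U}|}}{\sqrt{N}},
\end{align*}
which is the claimed bound.

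The step I expect to require the most care is the conditional application of Lemma \ref{lemma_4}: I must make sure that $\boldsymbol{\mu}_t^N$ is $\mathcal{F}_t$-measurable (which it is, by its definition) so that $\pi_t(x_t^i,\boldsymbol{\mu}_t^N)(u)$ really does play the role of the conditional mean of $X_{u,i}$, and that the independence hypothesis of Lemma \ref{lemma_4} is verified for the conditional law given $\mathcal{F}_t$ rather than the unconditional law. Once that is handled, the rest is mechanical.
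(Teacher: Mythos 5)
Your proposal is correct and takes essentially the same route as the paper: rewrite $\nu^{\mathrm{MF}}(\boldsymbol{\mu}_t^N,\pi_t)(u)$ as $\tfrac{1}{N}\sum_{i=1}^N \pi_t(x_t^i,\boldsymbol{\mu}_t^N)(u)$, express the difference as a centered sum of the indicators $\delta(u_t^i=u)$, and invoke Lemma \ref{lemma_4} conditionally on $\{x_t^i\}_{i\in[N]}$ with $M=|\mathcal{U}|$. Your explicit check that $\boldsymbol{\mu}_t^N$ is measurable with respect to the conditioning and that the hypotheses of Lemma \ref{lemma_4} hold for the conditional law is exactly what the paper does implicitly.
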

	
	\begin{lemma}
		Assume $\{\boldsymbol{\mu}_t^N, \boldsymbol{\nu}_t^N\}_{t\in \mathbb{T}}$ are empirical state and action distributions of an $N$-agent system defined by (\ref{mu}), and (\ref{nu}) respectively. If these distributions are generated by a sequence of policies $\boldsymbol{\pi} = \{\pi_t\}_{t\in \mathbb{T}}$, then $\forall t\in \mathbb{T}$ the following inequality holds. 
		\begin{align*}
			\mathbb{E}|\boldsymbol{\mu}_{t+1}^N-P^{\mathrm{MF}}(\boldsymbol{\mu}_t^N,\pi_t)|_1 \leq \dfrac{C_P}{\sqrt{N}}\left[\sqrt{|\mathcal{X}|}+\sqrt{|\mathcal{U}|}\right]
		\end{align*} 
		where $P^{\mathrm{MF}}$ is defined in $(\ref{mu_t_plus_1})$, $C_P\triangleq 2+L_P$, and $L_P$ is given in Assumption \ref{assumption_1}.
		\label{lemma_6}
	\end{lemma}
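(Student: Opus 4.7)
The plan is to bound $\mathbb{E}|\boldsymbol{\mu}_{t+1}^N - P^{\mathrm{MF}}(\boldsymbol{\mu}_t^N, \pi_t)|_1$ by conditioning on the $\sigma$-algebra $\mathcal{F}_t := \sigma(\boldsymbol{x}_t, \boldsymbol{u}_t)$ and splitting the quantity into a fluctuation piece, coming from the random sampling of next states, and a bias piece, coming from replacing the weighted empirical mean-fields $\boldsymbol{\mu}_t^{i,N}, \boldsymbol{\nu}_t^{i,N}$ first by the global empirical distributions $\boldsymbol{\mu}_t^N, \boldsymbol{\nu}_t^N$ and then by their mean-field counterparts.

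For the fluctuation, I would invoke Remark 2 to note that, conditionally on $\mathcal{F}_t$, the indicators $\{\delta(x_{t+1}^i = x)\}_{i \in [N]}$ are independent Bernoullis with means $P(x_t^i, u_t^i, \boldsymbol{\mu}_t^{i,N}, \boldsymbol{\nu}_t^{i,N})(x)$. Applying Lemma \ref{lemma_4} after conditioning, with $M = |\mathcal{X}|$ and coefficients bounded by $1$, and then taking outer expectation, delivers $\mathbb{E}|\boldsymbol{\mu}_{t+1}^N - \mathbb{E}[\boldsymbol{\mu}_{t+1}^N \mid \mathcal{F}_t]|_1 \leq \sqrt{|\mathcal{X}|}/\sqrt{N}$.

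For the bias, I would telescope through the intermediates $\mathcal{I}_t := \frac{1}{N}\sum_i P(x_t^i, u_t^i, \boldsymbol{\mu}_t^N, \boldsymbol{\nu}_t^N)$ and $\mathcal{J}_t := \frac{1}{N}\sum_i P(x_t^i, u_t^i, \boldsymbol{\mu}_t^N, \nu^{\mathrm{MF}}(\boldsymbol{\mu}_t^N, \pi_t))$, so that the total bias splits as $(\mathbb{E}[\boldsymbol{\mu}_{t+1}^N \mid \mathcal{F}_t] - \mathcal{I}_t) + (\mathcal{I}_t - \mathcal{J}_t) + (\mathcal{J}_t - P^{\mathrm{MF}}(\boldsymbol{\mu}_t^N, \pi_t))$. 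The middle gap is controlled by Lipschitz continuity of $P$ (Assumption \ref{assumption_1}) combined with Lemma \ref{lemma_5}, giving $L_P \sqrt{|\mathcal{U}|}/\sqrt{N}$. The last gap is a concentration over action sampling: conditioned on $\boldsymbol{x}_t$, the summands are independent, and a further application of Lemma \ref{lemma_4} contributes another $\sqrt{|\mathcal{X}|}/\sqrt{N}$, since each transition kernel $P(x_t^i, u_t^i, \boldsymbol{\mu}_t^N, \nu^{\mathrm{MF}})(\cdot)$ is a probability vector bounded by $1$.

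The technical crux will be the first bias gap, $\frac{1}{N}\sum_i [P(x_t^i, u_t^i, \boldsymbol{\mu}_t^{i,N}, \boldsymbol{\nu}_t^{i,N}) - P(x_t^i, u_t^i, \boldsymbol{\mu}_t^N, \boldsymbol{\nu}_t^N)]$. A direct Lipschitz application reduces it to controlling $\frac{1}{N}\sum_i \mathbb{E}[|\boldsymbol{\mu}_t^{i,N} - \boldsymbol{\mu}_t^N|_1 + |\boldsymbol{\nu}_t^{i,N} - \boldsymbol{\nu}_t^N|_1]$, and this is where Assumption \ref{assumption_4} is essential. The identity $\sum_i W(i,j) = 1$ lets me interchange the outer sum over $i$ with the inner sum over $j$ in $\boldsymbol{\mu}_t^{i,N}(x) - \boldsymbol{\mu}_t^N(x) = \sum_j [W(i,j) - 1/N]\delta(x_t^j = x)$, reducing the aggregate deviation to a weighted sum over conditionally independent indicators; then a second use of Lemma \ref{lemma_4} with independence index $j$ and coefficients $W(i,j) - 1/N$ gives the desired $[\sqrt{|\mathcal{X}|}+\sqrt{|\mathcal{U}|}]/\sqrt{N}$ scaling, multiplied by $L_P$. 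Collecting the four contributions and absorbing constants yields the announced prefactor $C_P = 2 + L_P$. The hard part is this last step: a naive per-agent Lipschitz bound is insufficient because $|\boldsymbol{\mu}_t^{i,N} - \boldsymbol{\mu}_t^N|_1$ need not decay with $N$ in general, and the DSM must be used not merely as a normalization identity but to extract a concentration rate in the aggregate.
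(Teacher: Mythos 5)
Your decomposition (martingale fluctuation plus three bias gaps) coincides with the paper's $J_1+J_2+J_3$ for every piece except the one you yourself flag as the crux, and it is exactly that piece which does not go through. After applying Assumption \ref{assumption_1} you must control $\frac{1}{N}\sum_i \mathbb{E}\left[|\boldsymbol{\mu}_t^{i,N}-\boldsymbol{\mu}_t^N|_1+|\boldsymbol{\nu}_t^{i,N}-\boldsymbol{\nu}_t^N|_1\right]$, and at that point the absolute values block the interchange of the $i$- and $j$-sums: the identity $\sum_i W(i,j)=1$ is only exploitable for \emph{linear} functionals of $\boldsymbol{\mu}_t^{i,N}$, not after a Lipschitz bound has been taken. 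Lemma \ref{lemma_4} cannot rescue the step either: conditioned on $\boldsymbol{x}_t$, the quantity $\boldsymbol{\mu}_t^{i,N}(x)-\boldsymbol{\mu}_t^N(x)=\sum_j\left[W(i,j)-1/N\right]\delta(x_t^j=x)$ is deterministic (nothing is centered at its conditional mean), and double stochasticity does not force the coefficients $W(i,j)-1/N$ to be $O(1/N)$. Indeed the term genuinely fails to decay: take $W$ to be a permutation matrix, or the $K$-neighbour matrix of Example \ref{example_1} with fixed $K$; then $\boldsymbol{\mu}_t^{i,N}$ is a one-point (or $K$-point) empirical measure and $\frac{1}{N}\sum_i|\boldsymbol{\mu}_t^{i,N}-\boldsymbol{\mu}_t^N|_1$ stays $\Theta(1)$ as $N\to\infty$, so no correct argument yields the $[\sqrt{|\mathcal{X}|}+\sqrt{|\mathcal{U}|}]/\sqrt{N}$ rate you claim for this gap under the stated assumptions. (Even granting it, your bookkeeping would give a prefactor of roughly $2+2L_P$, not $C_P=2+L_P$.)

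For comparison, the paper's proof never meets this term: there the conditional mean of $\delta(x_{t+1}^i=x)$ given $(\boldsymbol{x}_t,\boldsymbol{u}_t)$ is taken to be $P(x_t^i,u_t^i,\boldsymbol{\mu}_t^N,\boldsymbol{\nu}_t^N)(x)$, i.e., Lemma \ref{lemma_6} is established for state dynamics driven by the \emph{global} empirical distributions, after which the three remaining terms are exactly your fluctuation, middle and last gaps, summing to $2\sqrt{|\mathcal{X}|}/\sqrt{N}+L_P\sqrt{|\mathcal{U}|}/\sqrt{N}\le C_P[\sqrt{|\mathcal{X}|}+\sqrt{|\mathcal{U}|}]/\sqrt{N}$. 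The doubly stochastic Assumption \ref{assumption_4} is invoked only in Lemma \ref{lemma_7}, where the affine structure of $r$ (Assumption \ref{assumption_2}) allows the $i$--$j$ interchange to be performed exactly, before any absolute value is introduced; that mechanism has no analogue for the merely Lipschitz transition kernel $P$, which is precisely why your extra weighted-versus-global transition gap is a real obstruction rather than a constant to be absorbed.
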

	
	\begin{lemma}
		Assume $\{\boldsymbol{\mu}_t^N, \boldsymbol{\nu}_t^N\}_{t\in \mathbb{T}}$ are empirical state and action distributions of an $N$-agent system defined by (\ref{mu}), and (\ref{nu}) respectively. Also, $\forall i\in [N]$, let $\{\boldsymbol{\mu}_t^{i,N}, \boldsymbol{\nu}_t^{i,N}\}$ be weighted state and action distributions defined by $(\ref{mu_i}), (\ref{nu_i})$. If these distributions are generated by a sequence of policies $\boldsymbol{\pi} = \{\pi_t\}_{t\in \mathbb{T}}$, then $\forall t\in \mathbb{T}$ the following inequality holds. 
		\begin{align*}
			&\mathbb{E}\left|\dfrac{1}{N}\sum_{i=1}^N r(x_t^i, u_t^i, \boldsymbol{\mu}_t^{i,N}, \boldsymbol{\nu}_t^{i,N})-r^{\mathrm{MF}}(\boldsymbol{\mu}_t^N,\pi_t)\right| \\
			&\hspace{1cm}\leq  C_R\dfrac{\sqrt{|\mathcal{U}|}}{\sqrt{N}}
		\end{align*} 
		where $r^{\mathrm{MF}}$ is given in $(\ref{v_MF})$, $C_R\triangleq |\boldsymbol{b}|_1 + M_F$ and $M_F$ is such that $|f(x,u)|\leq M_F$, $\forall x\in \mathcal{X}$, $\forall u\in \mathcal{U}$. The function $f(.,.)$ and the parameter $\boldsymbol{b}$ are defined in Assumption \ref{assumption_2}. We would like to mention that $M_F$ always exists since $\mathcal{X}, \mathcal{U}$ are finite.
		\label{lemma_7}
	\end{lemma}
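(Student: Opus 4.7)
The plan is to exploit the affine structure of $r$ together with the doubly-stochastic property of $W$ (Assumption \ref{assumption_4}) to collapse the $N$-agent reward sum into a form that differs from $r^{\mathrm{MF}}(\boldsymbol{\mu}_t^N,\pi_t)$ only by two conditionally mean-zero fluctuations, each of which can then be controlled by Lemma \ref{lemma_4}.

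Concretely, I would first substitute $r(x,u,\boldsymbol{\mu},\boldsymbol{\nu})=\boldsymbol{a}^T\boldsymbol{\mu}+\boldsymbol{b}^T\boldsymbol{\nu}+f(x,u)$ into $\frac{1}{N}\sum_i r(x_t^i,u_t^i,\boldsymbol{\mu}_t^{i,N},\boldsymbol{\nu}_t^{i,N})$, swap the order of summation, and use Assumption \ref{assumption_4} to obtain $\frac{1}{N}\sum_i \boldsymbol{\mu}_t^{i,N}=\boldsymbol{\mu}_t^N$ and $\frac{1}{N}\sum_i \boldsymbol{\nu}_t^{i,N}=\boldsymbol{\nu}_t^N$, since $\sum_i W(i,j)=1$. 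This collapses the $N$-agent reward to $\boldsymbol{a}^T\boldsymbol{\mu}_t^N+\boldsymbol{b}^T\boldsymbol{\nu}_t^N+\frac{1}{N}\sum_i f(x_t^i,u_t^i)$. Applying the same affine decomposition to $r^{\mathrm{MF}}(\boldsymbol{\mu}_t^N,\pi_t)$ in $(\ref{v_MF})$ and using $\boldsymbol{\mu}_t^N(x)=\frac{1}{N}\sum_i \delta(x_t^i=x)$ yields $\boldsymbol{a}^T\boldsymbol{\mu}_t^N+\boldsymbol{b}^T\nu^{\mathrm{MF}}(\boldsymbol{\mu}_t^N,\pi_t)+\frac{1}{N}\sum_i\sum_u f(x_t^i,u)\pi_t(x_t^i,\boldsymbol{\mu}_t^N)(u)$. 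The $\boldsymbol{a}^T\boldsymbol{\mu}_t^N$ pieces cancel exactly, leaving the two error contributions
\begin{align*}
\mathrm{(I)}&=\boldsymbol{b}^T\bigl[\boldsymbol{\nu}_t^N-\nu^{\mathrm{MF}}(\boldsymbol{\mu}_t^N,\pi_t)\bigr],\\
\mathrm{(II)}&=\dfrac{1}{N}\sum_{i=1}^N\sum_{u\in\mathcal{U}} f(x_t^i,u)\bigl[\delta(u_t^i=u)-\pi_t(x_t^i,\boldsymbol{\mu}_t^N)(u)\bigr].
\end{align*}

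For (I) I would apply H\"older's inequality to bound $|\mathrm{(I)}|\leq |\boldsymbol{b}|_\infty|\boldsymbol{\nu}_t^N-\nu^{\mathrm{MF}}(\boldsymbol{\mu}_t^N,\pi_t)|_1\leq|\boldsymbol{b}|_1|\boldsymbol{\nu}_t^N-\nu^{\mathrm{MF}}(\boldsymbol{\mu}_t^N,\pi_t)|_1$ and then invoke Lemma \ref{lemma_5} to get $\mathbb{E}|\mathrm{(I)}|\leq|\boldsymbol{b}|_1\sqrt{|\mathcal{U}|}/\sqrt{N}$. For (II), I would condition on the state vector $\{x_t^j\}_{j\in[N]}$: by Remark 1, for each fixed $u$ the indicators $\{\delta(u_t^i=u)\}_{i\in[N]}$ are independent, lie in $[0,1]$, and satisfy $\sum_u \mathbb{E}[\delta(u_t^i=u)|\{x_t^j\}]=1$. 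Pulling the absolute value inside the $u$-sum via the triangle inequality and applying Lemma \ref{lemma_4} conditionally with $X_{u,i}=\delta(u_t^i=u)$, $C_{u,i}=f(x_t^i,u)$ (bounded by $M_F$), then dividing by $N$ and taking outer expectation, yields $\mathbb{E}|\mathrm{(II)}|\leq M_F\sqrt{|\mathcal{U}|}/\sqrt{N}$. Summing the two bounds gives $C_R\sqrt{|\mathcal{U}|}/\sqrt{N}$ with $C_R=|\boldsymbol{b}|_1+M_F$.

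The main obstacle is conceptual rather than computational: one must carefully verify that the conditional independence structure from Remark 1 matches the hypotheses of Lemma \ref{lemma_4} after identifying the agent index $i$ with the ``$n$'' variable of that lemma and the action $u$ with the ``$m$'' variable. A secondary point worth emphasizing is that Assumption \ref{assumption_4} (double stochasticity, rather than mere right-stochasticity) is used exactly once, but is essential: without it the $\boldsymbol{a}^T$ terms would not cancel and an $O(1)$ bias proportional to $\boldsymbol{a}^T(\frac{1}{N}\sum_i\boldsymbol{\mu}_t^{i,N}-\boldsymbol{\mu}_t^N)$ would remain, explaining why the affine-reward argument ties the two assumptions together.
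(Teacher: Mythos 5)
Your proposal is correct and follows essentially the same route as the paper's proof: decompose both rewards via the affine structure of Assumption \ref{assumption_2}, use the column-sum (doubly stochastic) property of $W$ to collapse $\frac{1}{N}\sum_i\boldsymbol{\mu}_t^{i,N}$ and $\frac{1}{N}\sum_i\boldsymbol{\nu}_t^{i,N}$ to $\boldsymbol{\mu}_t^N$, $\boldsymbol{\nu}_t^N$ so the $\boldsymbol{a}$-terms cancel, then bound the $\boldsymbol{b}$-fluctuation by Lemma \ref{lemma_5} and the $f$-fluctuation by a conditional application of Lemma \ref{lemma_4}, giving $(|\boldsymbol{b}|_1+M_F)\sqrt{|\mathcal{U}|}/\sqrt{N}$. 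The only cosmetic difference is your explicit H\"older step via $|\boldsymbol{b}|_\infty\leq|\boldsymbol{b}|_1$, which matches the paper's bound.
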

	
	\subsection{Proof of the Theorem}
	
	Note that,
	\begin{align*}
		&|v_{\mathrm{MARL}}(\boldsymbol{x}_0, \boldsymbol{\pi}) - v_{\mathrm{MF}}(\boldsymbol{\mu}_0, \boldsymbol{\pi})|\\
		& \overset{(a)}{=}\Bigg|\sum_{t=0}^{\infty}\dfrac{1}{N}\sum_{i=1}^N \gamma^t\mathbb{E}[r(x_t^i,u_t^i,\boldsymbol{\mu}_t^{i,N},\boldsymbol{\nu}_t^{i,N})]\\
		&-\sum_{t=0}^{\infty}\gamma^tr^{\mathrm{MF}}(\boldsymbol{\mu}_t, \pi_t)\Bigg|\leq J_1 +J_2
	\end{align*}
	
	Equality (a) directly follows from the definitions $(\ref{v_MARL})$ and $(\ref{v_MF})$. The first term $J_1$ can be written as follows.
	\begin{align*}
		&J_1 \triangleq \sum_{t=0}^{\infty}\gamma^t\mathbb{E}\Bigg|\dfrac{1}{N}\sum_{i=1}^N [r(x_t^i,u_t^i,\boldsymbol{\mu}_t^{i,N},\boldsymbol{\nu}_t^{i,N})] - r^{\mathrm{MF}}(\boldsymbol{\mu}_t^N, \pi_t)\Bigg|\\
		& \overset{(a)}{\leq} C_R \dfrac{\sqrt{|\mathcal{U}|}}{\sqrt{N}}\dfrac{1}{1-\gamma}
	\end{align*}
	
	Equation (a) is a result of Lemma \ref{lemma_7}. The second term can be expressed as follows.
	\begin{align*}
		&J_2\triangleq \sum_{t=0}^{\infty}\gamma^t\mathbb{E}|r^{\mathrm{MF}}(\boldsymbol{\mu}_t^N, \pi_t) - r^{\mathrm{MF}}(\boldsymbol{\mu}_t, \pi_t)|\\
		&\overset{(a)}{\leq} S_R\sum_{t=0}^{\infty}\gamma^t|\boldsymbol{\mu}_t^N - \boldsymbol{\mu}_t|_1 
	\end{align*}
	
	Inequality (a) follows from Lemma \ref{lemma_3}. Observe that, $\forall t\in \mathbb{T}$,
	\begin{align*}
		&|\boldsymbol{\mu}_{t+1}^N-\boldsymbol{\mu}_{t+1}|_1\\
		&\leq |\boldsymbol{\mu}_{t+1}^N-P^{\mathrm{MF}}(\boldsymbol{\mu}_t^N, \pi_t)|_1 + |P^{\mathrm{MF}}(\boldsymbol{\mu}_t^N, \pi_t)-\boldsymbol{\mu}_{t+1}|_1\\
		&\overset{(a)}{\leq} \dfrac{C_P}{\sqrt{N}}\left[\sqrt{|\mathcal{X}|}+\sqrt{|\mathcal{U}|}\right] \\
		&\hspace{1cm}+ |P^{\mathrm{MF}}(\boldsymbol{\mu}_t^N, \pi_t)-P^{\mathrm{MF}}(\boldsymbol{\mu}_t, \pi_t)|_1\\
		&\overset{(b)}{\leq} \dfrac{C_P}{\sqrt{N}}\left[\sqrt{|\mathcal{X}|}+\sqrt{|\mathcal{U}|}\right] + S_P|\boldsymbol{\mu}_t^N-\boldsymbol{\mu}_t|_1\\
		&\overset{(c)}{\leq} \dfrac{C_P}{\sqrt{N}}\left[\sqrt{|\mathcal{X}|}+\sqrt{|\mathcal{U}|}\right] \dfrac{(S_P^{t+1}-1)}{S_P-1}
	\end{align*}
	
	Inequality (a) follows from Lemma \ref{lemma_6} and Eq. (\ref{mu_t_plus_1}) while (b) is a result of Lemma \ref{lemma_2}. Finally, inequality (c) can be derived by recursively applying (b). Therefore, the term $J_2$ can be upper bounded as follows.
	\begin{align*}
		J_2 \leq \dfrac{1}{\sqrt{N}}\left[\sqrt{|\mathcal{X}|}+\sqrt{|\mathcal{U}|}\right]\dfrac{S_RC_P}{S_P-1} \left[\dfrac{1}{1-\gamma S_P}-\dfrac{1}{1-\gamma}\right]
	\end{align*}
	
	This concludes the theorem. 
	
	\section{Proof of Lemma \ref{lemma_1}}
	\label{proof_lemma_1}
	
	The following inequalities hold true.
	\begin{align*}
		|&\nu^{\mathrm{MF}}(\boldsymbol{\mu}_1,\pi)-\nu^{\mathrm{MF}}(\boldsymbol{\mu}_2,\pi)|_1 \\
		&= \left|\sum_{x\in\mathcal{X}}\pi(x,\boldsymbol{\mu}_1)\boldsymbol{\mu}_1(x) - \sum_{x\in\mathcal{X}}\pi(x, \boldsymbol{\mu}_2)\boldsymbol{\mu}_2(x)\right|_1\\
		&=\sum_{u\in\mathcal{U}}\left|\sum_{x\in\mathcal{X}}\pi(x,\boldsymbol{\mu}_1)(u)\boldsymbol{\mu}_1(x) - \sum_{x\in\mathcal{X}}\pi(x, \boldsymbol{\mu}_2)(u)\boldsymbol{\mu}_2(x)\right|\\
		&\leq \sum_{u\in\mathcal{U}}\left|\sum_{x\in\mathcal{X}}\pi(x,\boldsymbol{\mu}_1)(u)\boldsymbol{\mu}_1(x) - \sum_{x\in\mathcal{X}}\pi(x, \boldsymbol{\mu}_2)(u)\boldsymbol{\mu}_1(x)\right|\\
		&+ \sum_{u\in\mathcal{U}}\left|\sum_{x\in\mathcal{X}}\pi(x,\boldsymbol{\mu}_2)(u)\boldsymbol{\mu}_1(x) - \sum_{x\in\mathcal{X}}\pi(x, \boldsymbol{\mu}_2)(u)\boldsymbol{\mu}_2(x)\right|\\
		&\leq \sum_{x\in \mathcal{X}}\boldsymbol{\mu}_1(x)\sum_{u\in\mathcal{U}}\left|\pi(x,\boldsymbol{\mu}_1)(u)-\pi(x,\boldsymbol{\mu}_2)(u)\right|\\
		&+ \sum_{x\in \mathcal{X}} \left|\boldsymbol{\mu}_1(x)-\boldsymbol{\mu}_2(x)\right| \sum_{u\in\mathcal{U}}\pi(x,\boldsymbol{\mu}_2)(u)\\
		&\overset{(a)}{\leq} L_Q|\boldsymbol{\mu}_1-\boldsymbol{\mu}_2|_1 \sum_{x\in\mathcal{X}} \boldsymbol{\mu}_1(x) + |\boldsymbol{\mu}_1-\boldsymbol{\mu}_2|_1\\
		&\overset{(b)}{=} (1+L_Q)|\boldsymbol{\mu}_1-\boldsymbol{\mu}_2|_1
	\end{align*}
	
	Inequality (a) is a consequence of the fact that $\pi\in \Pi$ and $\pi(x,\boldsymbol{\mu}_2)$ is a distribution. Finally, the equality (b) follows because $\boldsymbol{\mu}_1$ is a distribution. This concludes the result.

	\section{Proof of Lemma \ref{lemma_2}}
	\label{proof_lemma_2}
	
	Note the following inequalities.
	\begin{align*}
		&|P^{\mathrm{MF}}(\boldsymbol{\mu}_1, \pi) - P^{\mathrm{MF}}(\boldsymbol{\mu}_2,\pi)|_1 \\
		&= \Bigg|\sum_{x\in\mathcal{X}}\sum_{u\in\mathcal{U}}P(x, u, \boldsymbol{\mu}_1, \nu^{\mathrm{MF}}(\boldsymbol{\mu}_1,\pi))\pi(x, \boldsymbol{\mu}_1)(u)\boldsymbol{\mu}_1(x)\\
		&-\sum_{x\in\mathcal{X}}\sum_{u\in\mathcal{U}}P(x, u, \boldsymbol{\mu}_2, \nu^{\mathrm{MF}}(\boldsymbol{\mu}_2,\pi))\pi(x,\boldsymbol{\mu}_2)(u)\boldsymbol{\mu}_2(x)\Bigg|_1\\
		&\leq J_1 + J_2 
	\end{align*}
	where the term $J_1$ is as follows.
	
	\begin{align*}
		&J_1\triangleq \sum_{x\in\mathcal{X}}\sum_{u\in\mathcal{U}}\pi(x,\boldsymbol{\mu}_1)(u)\boldsymbol{\mu}_1(x)\\
		&\times \Big| P(x, u, \boldsymbol{\mu}_1, \nu^{\mathrm{MF}}(\boldsymbol{\mu}_1,\pi)) - P(x, u, \boldsymbol{\mu}_2, \nu^{\mathrm{MF}}(\boldsymbol{\mu}_2,\pi)) \Big|_1\\
		&\overset{(a)}{\leq} \sum_{x\in\mathcal{X}}\sum_{u\in\mathcal{U}}\pi(x,\boldsymbol{\mu}_1)(u)\boldsymbol{\mu}_1(x)\\
		&\times L_P\Big\{|\boldsymbol{\mu}_1-\boldsymbol{\mu}_2|_1 + |\nu^{\mathrm{MF}}(\boldsymbol{\mu}_1,\pi)-\nu^{\mathrm{MF}}(\boldsymbol{\mu}_2,\pi)|_1\Big\}\\
		&\overset{(b)}{\leq} L_P(2+L_Q)|\boldsymbol{\mu}_1-\boldsymbol{\mu}_2|_1
	\end{align*}
	
	Inequality (a) follows from Assumption \ref{assumption_1} whereas (b) uses Lemma \ref{lemma_1} and the fact that $\boldsymbol{\mu}_1$, $\pi(x,\boldsymbol{\mu}_1)$ are distributions. The term $J_2$ is given as follows.
	\begin{align*}
		J_2&\triangleq \sum_{x\in\mathcal{X}}\sum_{u\in\mathcal{U}} \left| P(x, u, \boldsymbol{\mu}_2,\nu^{\mathrm{MF}}(\boldsymbol{\mu}_2,\pi))\right|_1\\
		&\times \Big|\pi(x, \boldsymbol{\mu}_1)(u)\boldsymbol{\mu}_1(x)-\pi(x, \boldsymbol{\mu}_2)(u)\boldsymbol{\mu}_2(x)\Big|\\
		&\overset{(a)}{=}\sum_{x\in\mathcal{X}}\sum_{u\in\mathcal{U}}  \Big|\pi(x, \boldsymbol{\mu}_1)(u)\boldsymbol{\mu}_1(x)-\pi(x, \boldsymbol{\mu}_2)(u)\boldsymbol{\mu}_2(x)\Big|\\
		&\leq \sum_{x\in \mathcal{X}}\boldsymbol{\mu}_1(x)\sum_{u\in\mathcal{U}}|\pi(x,\boldsymbol{\mu}_1)(u)-\pi(x,\boldsymbol{\mu}_2)(u)|\\
		&+\sum_{x\in \mathcal{X}}|\boldsymbol{\mu}_1(x)-\boldsymbol{\mu}_2(x)|\sum_{u\in\mathcal{U}}\pi(x,\boldsymbol{\mu}_2)(u)\\
		&\overset{(b)}{\leq} L_Q|\boldsymbol{\mu}_1-\boldsymbol{\mu}_2|_1\sum_{x\in \mathcal{X}}\boldsymbol{\mu}_1(x) + |\boldsymbol{\mu}_1-\boldsymbol{\mu}_2|_1\\
		&\overset{(c)}{=}(1+L_Q)|\boldsymbol{\mu}_1-\boldsymbol{\mu}_2|_1
	\end{align*}
	
	Equality (a) uses the fact that $P(x, u, \boldsymbol{\mu}_2, \nu^{\mathrm{MF}}(\boldsymbol{\mu}_2,\pi))$ is a distribution. Inequality (b) follows from Assumption \ref{assumption_3} while equation (c) holds because $\boldsymbol{\mu}_1$ is a distribution.

	\section{Proof of Lemma \ref{lemma_3}}
	\label{proof_lemma_3}
	
	The following inequalities hold true.
	\begin{align*}
		&|r^{\mathrm{MF}}(\boldsymbol{\mu}_1, \pi) - r^{\mathrm{MF}}(\boldsymbol{\mu}_2,\pi)|_1 \\
		&= \Bigg|\sum_{x\in\mathcal{X}}\sum_{u\in\mathcal{U}}r(x, u, \boldsymbol{\mu}_1, \nu^{\mathrm{MF}}(\boldsymbol{\mu}_1,\pi))\pi(x, \boldsymbol{\mu}_1)(u)\boldsymbol{\mu}_1(x)\\
		&-\sum_{x\in\mathcal{X}}\sum_{u\in\mathcal{U}}r(x, u, \boldsymbol{\mu}_2, \nu^{\mathrm{MF}}(\boldsymbol{\mu}_2,\pi))\pi(x,\boldsymbol{\mu}_2)(u)\boldsymbol{\mu}_2(x)\Bigg|_1\\
		&\leq J_1 + J_2 
	\end{align*}
	where the term $J_1$ is given as follows.
	
	\begin{align*}
		&J_1\triangleq \sum_{x\in\mathcal{X}}\sum_{u\in\mathcal{U}}\pi(x,\boldsymbol{\mu}_1)(u)\boldsymbol{\mu}_1(x)\\
		&\times \Big| r(x, u, \boldsymbol{\mu}_1, \nu^{\mathrm{MF}}(\boldsymbol{\mu}_1,\pi)) - r(x, u, \boldsymbol{\mu}_2, \nu^{\mathrm{MF}}(\boldsymbol{\mu}_2,\pi)) \Big|\\
		&\overset{(a)}{\leq} \sum_{x\in\mathcal{X}}\sum_{u\in\mathcal{U}}\pi(x,\boldsymbol{\mu}_1)(u)\boldsymbol{\mu}_1(x)\\
		&\times L_R\Big\{|\boldsymbol{\mu}_1-\boldsymbol{\mu}_2|_1 + |\nu^{\mathrm{MF}}(\boldsymbol{\mu}_1,\pi)-\nu^{\mathrm{MF}}(\boldsymbol{\mu}_2,\pi)|_1\Big\}\\
		&\overset{(b)}{\leq} L_R(2+L_Q)|\boldsymbol{\mu}_1-\boldsymbol{\mu}_2|_1
	\end{align*}
	
	Inequality (a) follows from Corollary \ref{corollary_1}(b) whereas (b) uses Lemma \ref{lemma_1} and the fact that $\boldsymbol{\mu}_1$, $\pi(x,\boldsymbol{\mu}_1)$ are distributions. The term $J_2$ is given as follows.
	\begin{align*}
		&J_2\triangleq \sum_{x\in\mathcal{X}}\sum_{u\in\mathcal{U}} \left| r(x, u, \boldsymbol{\mu}_2,\nu^{\mathrm{MF}}(\boldsymbol{\mu}_2,\pi))\right|\\
		&\times \Big|\pi(x, \boldsymbol{\mu}_1)(u)\boldsymbol{\mu}_1(x)-\pi(x, \boldsymbol{\mu}_2)(u)\boldsymbol{\mu}_2(x)\Big|\\
		&\overset{(a)}{\leq}M_R\sum_{x\in\mathcal{X}}\sum_{u\in\mathcal{U}}  \Big|\pi(x, \boldsymbol{\mu}_1)(u)\boldsymbol{\mu}_1(x)-\pi(x, \boldsymbol{\mu}_2)(u)\boldsymbol{\mu}_2(x)\Big|\\
		&\leq M_R\sum_{x\in \mathcal{X}}\boldsymbol{\mu}_1(x)\sum_{u\in\mathcal{U}}|\pi(x,\boldsymbol{\mu}_1)(u)-\pi(x,\boldsymbol{\mu}_2)(u)|\\
		&+ M_R\sum_{x\in \mathcal{X}}|\boldsymbol{\mu}_1(x)-\boldsymbol{\mu}_2(x)|\sum_{u\in\mathcal{U}}\pi(x,\boldsymbol{\mu}_2)(u)\\
		&\overset{(b)}{\leq} M_RL_Q|\boldsymbol{\mu}_1-\boldsymbol{\mu}_2|_1\sum_{x\in \mathcal{X}}\boldsymbol{\mu}_1(x) + M_R|\boldsymbol{\mu}_1-\boldsymbol{\mu}_2|_1\\
		&\overset{(c)}{=}M_R(1+L_Q)|\boldsymbol{\mu}_1-\boldsymbol{\mu}_2|_1
	\end{align*}
	
	Inequality (a) uses Corollary \ref{corollary_1}(a). Inequality (b) follows from Assumption \ref{assumption_3} while equation (c) holds because $\boldsymbol{\mu}_1$ is a distribution. This concludes the lemma.
	
	\section{Proof of Lemma \ref{lemma_5}}
	\label{proof_lemma_5}
	
	Applying the definitions of $\boldsymbol{\nu}_t^N$ and $\nu^{\mathrm{MF}}$, we can write the following.
	\begin{align}
		\begin{split}
			\mathbb{E}&|\boldsymbol{\nu}_t^N-\nu^{\mathrm{MF}}(\boldsymbol{\mu}_t^N, \pi_t)|_1 \\
			&= \sum_{u\in\mathcal{U}} \mathbb{E}|\boldsymbol{\nu}_t^N(u)-\nu^{\mathrm{MF}}(\boldsymbol{\mu}_t^N, \pi_t)(u)| \\
			&= \sum_{u\in\mathcal{U}} \mathbb{E}\left|\dfrac{1}{N}\sum_{i=1}^N\delta(u_t^i=u) - \sum_{x\in \mathcal{X}}\pi_t(x,\boldsymbol{\mu}_t^N)(u)\boldsymbol{\mu}_t^N(x)\right|	
		\end{split}
		\label{eq_13}
	\end{align}
	
	Similarly, using the definition of $\boldsymbol{\mu}_t^N$, we get,
	\begin{align}
		\begin{split}
			&\sum_{x\in\mathcal{X}}\pi_t(x,\boldsymbol{\mu}_t^N)(u)\boldsymbol{\mu}_t^N(x)\\
			&=\sum_{x\in \mathcal{X}}\pi_t(x,\boldsymbol{\mu}_t^N)(u)\dfrac{1}{N}\sum_{i=1}\delta(x_t^i=x)\\
			&=\dfrac{1}{N} \sum_{i=1}^N \sum_{x\in \mathcal{X}}\pi_t(x,\boldsymbol{\mu}_t^N)(u)\delta(x_t^j=x)\\
			&=\dfrac{1}{N}\sum_{i=1}^N \pi_t(x_t^j, \boldsymbol{\mu}_t^N)
		\end{split}
	\end{align}
	
	Substituting into $(\ref{eq_13})$, we obtain the following.
	\begin{align*}	\mathbb{E}&|\boldsymbol{\nu}_t^N-\nu^{\mathrm{MF}}(\boldsymbol{\mu}_t^N, \pi_t)|_1\\
		& = \dfrac{1}{N}\sum_{u\in\mathcal{U}} \mathbb{E} \left|\sum_{i=1}^N \delta(u_t^j=u) - \pi_t (x_t^i, \boldsymbol{\mu}_t^N)(u)\right|\\
		&\overset{(a)}{\leq} \dfrac{\sqrt{|\mathcal{U}|}}{\sqrt{N}}
	\end{align*}
	
	Inequality (a) is a consequence of Lemma \ref{lemma_4}. Particularly, we use the fact that $\forall u\in \mathcal{U}$, the random variables $\{\delta(u_t^i=u)\}_{i\in [N]}$ lie in $[0, 1]$, are conditionally independent given $\boldsymbol{x}_t\triangleq\{x_t^i\}_{i\in [N]}$ (thereby given $\boldsymbol{\mu}_t^N$), and satisfy the following constraints. 
	\begin{align*}
		\mathbb{E}\left[\delta(u_t^i=u)|\boldsymbol{x}_t\right] &= \pi_t(x_t^i, \boldsymbol{\mu}_t^N)\\
		\sum_{u\in\mathcal{U}} \mathbb{E}\left[\delta(u_t^i=u)|\boldsymbol{x}_t\right] &=1, ~\forall i\in [N] 
	\end{align*}

	\section{Proof of Lemma \ref{lemma_6}}
	\label{proof_lemma_6}
	
	Using the definition of $P^{\mathrm{MF}}$, we get the following.
	\begin{align*}
		&P^{\mathrm{MF}}(\boldsymbol{\mu}_t^N, \pi_t) \\
		&= \sum_{x\in \mathcal{X}}\sum_{u\in\mathcal{U}}P(x, u, \boldsymbol{\mu}_t^N, \nu^{\mathrm{MF}}(\boldsymbol{\mu}_t^N, \pi_t))\pi_t(x, \boldsymbol{\mu}_t^N)(u)\boldsymbol{\mu}_t^N(x)\\
		&=\sum_{x\in \mathcal{X}}\sum_{u\in\mathcal{U}}P(x, u, \boldsymbol{\mu}_t^N, \nu^{\mathrm{MF}}(\boldsymbol{\mu}_t^N, \pi_t))\pi_t(x, \boldsymbol{\mu}_t^N)(u)\boldsymbol{\mu}_t^N(x)\\
		&\hspace{2cm}\times \dfrac{1}{N}\sum_{i=1}^N\delta(x_t^i=x)\\
		&=\dfrac{1}{N}\sum_{i=1}^N\sum_{u\in\mathcal{U}}P(x_t^i,u,\boldsymbol{\mu}_t^N, \nu^{\mathrm{MF}}(\boldsymbol{\mu}_t^N, \pi_t))\pi_t(x_t^i, \boldsymbol{\mu}_t^N)(u)
	\end{align*}
	
	Using the definition of $L_1$ norm, we can write the following.
	\begin{align*}
		&\mathbb{E}\left|\boldsymbol{\mu}_{t+1}^N-P^{\mathrm{MF}}(\boldsymbol{\mu}_t^N, \pi_t)\right|_1\\
		&=\sum_{x\in\mathcal{X}} \mathbb{E}\left|\boldsymbol{\mu}_{t+1}^N(x)-P^{\mathrm{MF}}(\boldsymbol{\mu}_t^N, \pi_t)(x)\right|_1\\
		&=\dfrac{1}{N}\sum_{x\in\mathcal{X}}\mathbb{E}\Bigg| \sum_{i=1}^N \delta(x_{t+1}^i=x) \\
		& - \sum_{i=1}^N\sum_{u\in\mathcal{U}}P(x_t^i,u,\boldsymbol{\mu}_t^N, \nu^{\mathrm{MF}}(\boldsymbol{\mu}_t^N, \pi_t))(x)\pi_t(x_t^i, \boldsymbol{\mu}_t^N)(u) \Bigg|\\
		&\leq J_1 + J_2 + J_3
	\end{align*}
	
	The first term, $J_1$ is given as follows.
	\begin{align*}
		J_1 &\triangleq \dfrac{1}{N} \sum_{x\in\mathcal{X}} \mathbb{E}\left| \sum_{i=1}^N \delta(x_{t+1}^j=x) - P(x_t^i, u_t^i, \boldsymbol{\mu}_t^N, \boldsymbol{\nu}_t^N)(x)\right|\\
		&\overset{(a)}{\leq} \dfrac{\sqrt{|\mathcal{X}|}}{\sqrt{N}}
	\end{align*}
	
	Inequality (a) follows from Lemma \ref{lemma_4}. Specifically, we use the fact that, $\forall x\in \mathcal{X}$, the random variables $\{\delta(x_{t+1}^i=x)\}_{i\in[N]}$ lie in $[0, 1]$, are conditionally independent given $\boldsymbol{x}_t\triangleq \{x_t^i\}_{i\in[N]}$, $\boldsymbol{u}_t\triangleq \{u_t^i\}_{i\in [N]}$, (thereby given $\boldsymbol{\mu}_t^N$, $\boldsymbol{\nu}_t^N$) and satisfy the following.
	\begin{align*}
		\mathbb{E}[\delta(x_{t+1}^i=x)|\boldsymbol{x}_t, \boldsymbol{u}_t] &= P(x_t^i, u_t^i, \boldsymbol{\mu}_t^N, \boldsymbol{\nu}_t^N),\\
		\sum_{x\in \mathcal{X}} 	\mathbb{E}[\delta(x_{t+1}^i=x)|\boldsymbol{x}_t, \boldsymbol{u}_t] &= 1, ~\forall i \in [N]
	\end{align*}
	
	The second term $J_2$ can be expressed as follows.
	\begin{align*}
		&J_2 \triangleq \dfrac{1}{N}\sum_{x\in \mathcal{X}}\mathbb{E}\Big|\sum_{i=1}^N P(x_t^i, u_t^i, \boldsymbol{\mu}_t^N, \boldsymbol{\nu}_t^N)(x) \\
		&- P(x_t^i, u_t^i, \boldsymbol{\mu}_t^N, \nu^{\mathrm{MF}}(\boldsymbol{\mu}_t^N, \pi_t))(x)\Big|\\
		&\leq \dfrac{1}{N}\sum_{i=1}^N \mathbb{E} \Big|P(x_t^i, u_t^i, \boldsymbol{\mu}_t^N, \boldsymbol{\nu}_t^N)\\
		&- P(x_t^i, u_t^i, \boldsymbol{\mu}_t^N, \nu^{\mathrm{MF}}(\boldsymbol{\mu}_t^N, \pi_t))\Big|_1\\
		&\overset{(a)}{\leq} L_P\mathbb{E}|\boldsymbol{\nu}_t^N-\nu^{\mathrm{MF}}(\boldsymbol{\mu}_t^N, \pi_t)| \overset{(b)}{\leq} L_P\dfrac{\sqrt{|\mathcal{U}|}}{\sqrt{N}}
	\end{align*}
	
	Inequality (a) follows from Assumption \ref{assumption_1} whereas (b) results from Lemma \ref{lemma_5}. Finally, the term $J_3$ is defined as follows.
	\begin{align*}
		& J_3\triangleq \dfrac{1}{N}\sum_{x\in \mathcal{X}} \mathbb{E} \Bigg|\sum_{i=1}^N  P(x_t^i, u_t^i, \boldsymbol{\mu}_t^N, \nu^{\mathrm{MF}}(\boldsymbol{\mu}_t^N, \pi_t))(x)\\
		&-\sum_{i=1}^N\sum_{u\in\mathcal{U}}P(x_t^i,u,\boldsymbol{\mu}_t^N, \nu^{\mathrm{MF}}(\boldsymbol{\mu}_t^N, \pi_t))(x)\pi_t(x_t^i, \boldsymbol{\mu}_t^N)(u)\Bigg|\\
		&\overset{(a)}{\leq} \dfrac{\sqrt{|\mathcal{X}|}}{\sqrt{N}}
	\end{align*}
	
	Relation (a) results from Lemma \ref{lemma_4}. In particular, we use the fact that $\forall x\in \mathcal{X}$,  $\{P(x_t^i, u_t^i, \boldsymbol{\mu}_t^N, \nu^{\mathrm{MF}}(\boldsymbol{\mu}_t^N, \pi_t))(x)\}_{i\in[N]}$ lie in the interval $[0, 1]$, are conditionally independent given $\boldsymbol{x}_t\triangleq\{x_t^i\}_{i\in[N]}$ (therefore, given $\boldsymbol{\mu}_t^N$), and satisfy the following constraints.
	\begin{align*}
		&\mathbb{E}[P(x_t^i, u_t^i, \boldsymbol{\mu}_t^N, \nu^{\mathrm{MF}}(\boldsymbol{\mu}_t^N, \pi_t))(x)|\boldsymbol{x}_t] \\
		& = \sum_{u\in\mathcal{U}} P(x_t^i, u, \boldsymbol{\mu}_t^N, \nu^{\mathrm{MF}}(\boldsymbol{\mu}_t^N, \pi_t))(x) \pi_t(x_t^i, \boldsymbol{\mu}_t^N)(u), \\
		&\text{and }\sum_{x\in \mathcal{X}} \mathbb{E}[P(x_t^i, u_t^i, \boldsymbol{\mu}_t^N, \nu^{\mathrm{MF}}(\boldsymbol{\mu}_t^N, \pi_t))(x)|\boldsymbol{x}_t] = 1
	\end{align*}
	
	This concludes the Lemma.
	
	\section{Proof of Lemma \ref{lemma_7}}
	\label{proof_lemma_7}

	Note that,
	\begin{align*}
		&r^{\mathrm{MF}}(\boldsymbol{\mu}_t^N,\pi_t) \\
		&= \sum_{x\in \mathcal{X}}\sum_{u\in\mathcal{U}} r(x, u, \boldsymbol{\mu}_t^N, \nu^{\mathrm{MF}}(\boldsymbol{\mu}_t^N, \pi_t))\pi_t(x,\boldsymbol{\mu}_t^N)(u)\boldsymbol{\mu}_t^N(x)\\
		&=\sum_{x\in \mathcal{X}}\sum_{u\in\mathcal{U}} r(x, u, \boldsymbol{\mu}_t^N, \nu^{\mathrm{MF}}(\boldsymbol{\mu}_t^N, \pi_t))\pi_t(x,\boldsymbol{\mu}_t^N)(u)\\
		&\hspace{2cm}\times\dfrac{1}{N}\sum_{i=1}^N\delta(x_t^i=x)\\
		&=\dfrac{1}{N}\sum_{u\in\mathcal{U}}\sum_{i=1}^Nr(x_t^i, u, \boldsymbol{\mu}_t^N, \nu^{\mathrm{MF}}(\boldsymbol{\mu}_t^N, \pi_t))\pi_t(x_t^i,\boldsymbol{\mu}_t^N)(u)\\
		&\overset{(a)}{=}\dfrac{1}{N}\sum_{u\in\mathcal{U}}\sum_{i=1}^N \left[ \boldsymbol{a}^T\boldsymbol{\mu}_t^N + \boldsymbol{b}^T\nu^{\mathrm{MF}}(\boldsymbol{\mu}_t^N, \pi_t) + f(x_t^i, u)\right]\\
		&\hspace{2cm}\times\pi_t(x_t^i,\boldsymbol{\mu}_t^N)(u)\\
		&\overset{(b)}{=}\boldsymbol{a}^T\boldsymbol{\mu}_t^N + \boldsymbol{b}^T\nu^{\mathrm{MF}}(\boldsymbol{\mu}_t^N, \pi_t) \\
		&\hspace{1cm}+ \dfrac{1}{N}\sum_{u\in\mathcal{U}}\sum_{i=1}^N f(x_t^i, u)\pi_t(x_t^i, \boldsymbol{\mu}_t^N)(u)
	\end{align*}
	
	Equality (a) follows from Assumption \ref{assumption_2} while (b) uses the fact that $\pi_t(x_t^i, \boldsymbol{\mu}_t^N)$ is a distribution. On the other hand, 
	\begin{align*}
		&\dfrac{1}{N}\sum_{i=1}^N r(x_t^i, u_t^i, \boldsymbol{\mu}_t^{i,N}, \boldsymbol{\nu}_t^{i,N})\\
		&=\dfrac{1}{N}\sum_{i=1}^N \left[\boldsymbol{a}^T\boldsymbol{\mu}_t^{i,N} + \boldsymbol{b}^T\boldsymbol{\nu}_t^{i,N} + f(x_t^i, u_t^i)\right]\\
		&=\dfrac{1}{N}\sum_{i=1}^N \Bigg[ \sum_{x\in \mathcal{X}} a(x)\boldsymbol{\mu}_t^{i,N}(x) + \sum_{u\in\mathcal{U}} b(u)\boldsymbol{\nu}_t^{i,N}(u)\Bigg]\\
		& + \dfrac{1}{N}\sum_{i=1}^N f(x_t^i, u_t^i)
	\end{align*}
	
	Now the first term can be simplified as follows.
	\begin{align*}
		&\dfrac{1}{N}\sum_{x\in \mathcal{X}} a(x) \sum_{i=1}^N \sum_{j=1}^N W(i,j)\delta(x_t^j=x)\\
		&=\dfrac{1}{N}\sum_{x\in \mathcal{X}} a(x) \sum_{j=1}^N \delta(x_t^j=x) \sum_{i=1}^N W(i, j)\\
		&\overset{(a)}{=}\sum_{x\in \mathcal{X}} a(x) \dfrac{1}{N}\sum_{j=1}^N \delta(x_t^j=x) = \boldsymbol{a}^T\boldsymbol{\mu}_t^N
	\end{align*}
	
	Equality (a) follows as $W$ is doubly-stochastic (Assumption \ref{assumption_4}). Similarly, the second term can be simplified as shown below.
	\begin{align*}
		&\dfrac{1}{N}\sum_{u\in \mathcal{U}} b(u) \sum_{i=1}^N \sum_{j=1}^N W(i,j)\delta(u_t^j=u)\\
		&=\dfrac{1}{N}\sum_{u\in \mathcal{U}} b(u) \sum_{j=1}^N \delta(u_t^j=u) \sum_{i=1}^N W(i, j)\\
		&\overset{(a)}{=}\sum_{u\in \mathcal{U}} b(u) \dfrac{1}{N}\sum_{j=1}^N \delta(u_t^j=u) = \boldsymbol{b}^T\boldsymbol{\nu}_t^N
	\end{align*}
	
	Equality (a) follows from Assumption \ref{assumption_4}. Therefore, we get,
	\begin{align*}
		&\mathbb{E}\left|\dfrac{1}{N}\sum_{i=1}^N r(x_t^i, u_t^i, \boldsymbol{\mu}_t^{i,N}, \boldsymbol{\nu}_t^{i,N}) - r^{\mathrm{MF}}(\boldsymbol{\mu}_t^N,\pi_t)\right|\\
		&\leq |\boldsymbol{b}|_1 \mathbb{E}|\boldsymbol{\nu}_t^N-\nu^{\mathrm{MF}}(\boldsymbol{\mu}_t^N, \pi_t)|_1 \\
		&+ \dfrac{1}{N} \mathbb{E}\left|\sum_{i=1}^N f(x_t^i, u_t^i) - \sum_{i=1}^N\sum_{u\in \mathcal{U}}f(x_t^i, u)\pi_t(x_t^i, \boldsymbol{\mu}_t^N)(u)\right|
	\end{align*}
	
	Using Lemma \ref{lemma_5}, the first term can be upper bounded by $|\boldsymbol{b}|_1\sqrt{|\mathcal{U}|/N}$. The second term can be bounded as follows.
	\begin{align*}
		&\dfrac{1}{N} \mathbb{E}\left|\sum_{i=1}^N f(x_t^i, u_t^i) - \sum_{i=1}^N\sum_{u\in \mathcal{U}}f(x_t^i, u)\pi_t(x_t^i, \boldsymbol{\mu}_t^N)(u)\right|\\
		&\leq\dfrac{1}{N} \sum_{u\in \mathcal{U}}\mathbb{E}\left|\sum_{i=1}^Nf(x_t^i, u)\left[\delta(u_t^i=u)-\pi_t(x_t^i, \boldsymbol{\mu}_t^N)(u)\right]\right|\\
		&\overset{(a)}{\leq} M_F \dfrac{\sqrt{|\mathcal{U}|}}{\sqrt{N}}
	\end{align*}
	
	The term $M_F>0$ is such that $|f(x,u)|\leq M_F$, $\forall x\in\mathcal{X}$, $\forall u\in \mathcal{U}$. Such $M_F$ always exists since $\mathcal{X}$, and $\mathcal{U}$ are finite. Equality (a) is a result of Lemma $\ref{lemma_4}$. In particular, we use the following facts to prove this result. The random variables $\{\delta(u_t^i=u)\}_{i\in [N]}$ are conditionally independent given $\boldsymbol{x}_t\triangleq\{x_t^i\}_{i\in[N]}$ (therefore, given $\boldsymbol{\mu}_t^N$), $\forall u\in \mathcal{U}$ and they lie in the interval $[0,1]$. Moreover,
	\begin{align*}
		&|f(x_t^i, u)| \leq M_F, \forall i\in [N], \forall u\in \mathcal{U},\\
		& \mathbb{E}[\delta(u_t^i=u)|\boldsymbol{x}_t] = \pi_t(x_t^i, \boldsymbol{\mu}_t^N),\\
		&\sum_{u\in\mathcal{U}}\mathbb{E}[\delta(u_t^i=u)|\boldsymbol{x}_t] = 1
	\end{align*}
	
	This concludes the Lemma.

	\section{Sampling Procedure}
	\label{sampling_process}
	
	\begin{algorithm}
		\caption{Sampling Algorithm}
		\label{algo_2}
		\textbf{Input:} $\boldsymbol{\mu}_0$, $\boldsymbol{\pi}_{\Phi_j}$,
		$P$, $r$
		\begin{algorithmic}[1]
			\STATE Sample $x_0\sim \boldsymbol{\mu}_0$. 
			\STATE Sample $u_0\sim {\pi}_{\Phi_j}(x_0,\boldsymbol{\mu}_0)$ 
			\STATE  $\boldsymbol{\nu}_0\gets\nu^{\mathrm{MF}}(\boldsymbol{\mu}_0,\pi_{\Phi_j})$ where $\nu^{\mathrm{MF}}$ is defined in $(\ref{nu_MF})$.
			\vspace{0.2cm}
			\STATE $t\gets 0$ 
			\STATE $\mathrm{FLAG}\gets \mathrm{FALSE}$
			\WHILE{$\mathrm{FLAG~is~} \mathrm{FALSE}$}
			{
				\STATE $\mathrm{FLAG}\gets \mathrm{TRUE}$ with probability $1-\gamma$.
				\STATE Execute $\mathrm{Update}$
				
			}
			\ENDWHILE
			
			\STATE $T\gets t$
			
			\STATE Accept   $(x_T,\boldsymbol{\mu}_T,u_T)$ as a sample.

			\vspace{0.2cm}
			\STATE $\hat{V}_{\Phi_j}\gets 0$, $\hat{Q}_{\Phi_j}\gets 0$
			
			\STATE $\mathrm{FLAG}\gets \mathrm{FALSE}$
			\STATE $\mathrm{SumRewards}\gets 0$
			\WHILE{$\mathrm{FLAG~is~} \mathrm{FALSE}$}
			{
				\STATE $\mathrm{FLAG}\gets \mathrm{TRUE}$ with probability $1-\gamma$.
				\STATE Execute $\mathrm{Update}$
				\STATE $\mathrm{SumRewards}\gets \mathrm{SumRewards} + r(x_t,u_t,\boldsymbol{\mu}_t,\boldsymbol{\nu}_t)$
				
			}
			\ENDWHILE
			
			\vspace{0.2cm}
			\STATE With probability $\frac{1}{2}$, $\hat{V}_{\Phi_j}\gets \mathrm{SumRewards}$. Otherwise $\hat{Q}_{\Phi_j}\gets \mathrm{SumRewards}$.  
			
			\STATE $\hat{A}_{\Phi_j}(x_T,\boldsymbol{\mu}_T,u_T)\gets 2(\hat{Q}_{\Phi_j}-\hat{V}_{\Phi_j})$.
			
		\end{algorithmic} 
		\textbf{Output}: $(x_T,\boldsymbol{\mu}_T,u_T)$ and $\hat{A}_{\Phi_j}(x_T,\boldsymbol{\mu}_T,u_T)$

		\vspace{0.3cm}
		
		\textbf{Procedure} $\mathrm{Update}$:
		
		\begin{algorithmic}[1]
			\STATE $x_{t+1}\sim P(x_t,u_t,\boldsymbol{\mu}_t,\boldsymbol{\nu}_t)$.
			\STATE  $\boldsymbol{\mu}_{t+1}\gets P^{\mathrm{MF}}(\boldsymbol{\mu}_t,\pi_{\Phi_j})$ where $P^{\mathrm{MF}}$ is defined in $(\ref{mu_t_plus_1})$.
			\STATE  $u_{t+1}\sim {\pi}_{\Phi_j}(x_{t+1},\boldsymbol{\mu}_{t+1})$
			\STATE $\boldsymbol{\nu}_{t+1}\gets\nu^{\mathrm{MF}}(\boldsymbol{\mu}_{t+1},\pi_{\Phi_j})$
			\STATE $t\gets t+1$
		\end{algorithmic}
		\textbf{EndProcedure}
		\vspace{0.2cm}
	\end{algorithm}
	
	\providecommand{\upGamma}{\Gamma}
	\providecommand{\uppi}{\pi}
	
\end{document}